\setlist[itemize]{leftmargin=1cm}
\setlist[enumerate]{leftmargin=1cm}
\newtheorem{theorem}{Theorem}[section]
\newtheorem{lemma}[theorem]{Lemma}
\theoremstyle{definition}
\theoremstyle{remark}
\newtheorem{remark}[theorem]{Remark}
\newtheoremstyle{named}{}{}{\itshape}{}{\bfseries}{.}{.5em}{\thmnote{#3 }#1}
\theoremstyle{named}
\title{Transfer Learning for Diffusion Models}
\author{
  Yidong Ouyang$^{1}$, Liyan Xie$^{2}$\thanks{Correspondence to: Liyan Xie, \texttt{liyanxie@umn.edu}. 
  }, Hongyuan Zha$^{3}$, Guang Cheng$^{1}$ \\
  \\
  $^{1}$Department of Statistics and Data Science, University of California, Los Angeles \\
  $^{2}$Department of Industrial and Systems Engineering, University of Minnesota Twin Cities \\
  $^{3}$School of Data Science, Chinese University of Hong Kong, Shenzhen \\
   {\footnotesize	 
\texttt{yidongouyang@g.ucla.edu}; \texttt{liyanxie@umn.edu}; \texttt{zhahy@cuhk.edu.cn}; \texttt{guangcheng@ucla.edu}}
}
\begin{document}

\maketitle

\begin{abstract}
Diffusion models, a specific type of generative model, have achieved unprecedented performance in recent years and consistently produce high-quality synthetic samples. A critical prerequisite for their notable success lies in the presence of a substantial number of training samples, which can be impractical in real-world applications due to high collection costs or associated risks. Consequently, various finetuning and regularization approaches have been proposed to transfer knowledge from existing pre-trained models to specific target domains with limited data. This paper introduces the Transfer Guided Diffusion Process (TGDP), a novel approach distinct from conventional finetuning and regularization methods. 
We prove that the optimal diffusion model for the target domain integrates pre-trained diffusion models on the source domain with additional guidance from a domain classifier. 
We further extend TGDP to a conditional version for modeling the joint distribution of data and its corresponding labels, together with two additional regularization terms to enhance the model performance. We validate the effectiveness of TGDP on both simulated and real-world datasets.
\end{abstract}


\section{Introduction}\label{sec:into}

Diffusion models have achieved remarkable success in modeling data distributions and generating various types of synthetic data, such as images \citep{Ho2020DDPM,Song2021ScoreBasedGM, Karras2022ElucidatingTD}, videos \citep{Ho2022VideoDM}, vision language \citep{Ruiz2022DreamBoothFT, Saharia2022PhotorealisticTD, Ramesh2022HierarchicalTI}, and time series \citep{Tashiro2021CSDICS}. 
However, their success heavily relies on the availability of a large number of training samples. In real-world applications, acquiring ample samples for specific tasks can be challenging due to the high costs associated with data collection or labeling, or the potential risks involved. 
Therefore, an important research question is how to effectively transfer knowledge from a pre-trained generative model in the source domain (using existing large-scale datasets) to a target domain (for specific tasks) where data is limited.

Training a generative model directly or finetuning a pre-trained generative model on limited data from the target domain often results in significant performance degradation due to overfitting and memorization. To address these issues, numerous studies have proposed methods in generative domain adaptation, including the GAN-based models \citep{Yang2023ImageSU, Yang2021OneShotGD, Zhang2022GeneralizedOD, Alanov2022HyperDomainNetUD, Zhao2022ACL,Ojha2021FewshotIG, Zhang2022TowardsDA, Duan2023WeditGANFI, Hou2022DynamicWS, Xiao2022FewSG, Li2020FewshotIG, Zhao2020OnLP}, diffusion-based model \citep{Moon2022FinetuningDM, Zhu2022FewshotIG, Xie2023DiffFitUT}, etc. Specifically, approaches using diffusion models can be divided into two categories: finetuning lightweight adapters \citep{Moon2022FinetuningDM, Xie2023DiffFitUT} and finetuning with regularization \citep{Zhu2022FewshotIG}.
%
%
Approaches involving finetuning lightweight adapters focus on adjusting only a subset of parameters in a pre-trained model. 
The primary challenge here is identifying which parameters to finetune. 
This process is typically heuristic and requires preliminary experiments to identify the most efficient parameters for adjustment. Additionally, the specific parameters to be finetuned can vary across different neural network architectures. 
On the other hand, the challenge in incorporating regularization during the finetuning process is the heuristic design of the regularization term, which can significantly alter the optimization landscape. We refer to Appendix \ref{app:related} for a more detailed discussion of existing literature.

In this work, we introduce a new approach, termed Transfer Guided Diffusion Process (TGDP), to transfer knowledge in the source domain generative model to the target domain with limited samples. Unlike {finetuning-based methods that primarily use the pre-trained model as an initialization point, TGDP leverages the pre-trained model as a plug-and-play prior.} We show that the score function for the diffusion model on the target domain is the score function on the source domain (which can be pre-trained) with additional guidance as shown by Theorem \ref{thm:IS_guidance} and Theorem \ref{thm:IS_guidance_conditional}. The guidance network is related to the density ratio of the target and source domain data  distributions. Consequently, we convert the original optimization problem for a diffusion model on the target domain into estimating the density ratio.

We utilize a domain classifier (binary classifier) along with samples from both domains to efficiently estimate the density ratio. 
Furthermore, we introduce two additional regularization terms for better training and calibration of the guidance network. These regularization terms are equivalent forms that the optimal guidance network should satisfy, ensuring they do not alter the original optimization problem.
%
We validate the effectiveness of our approach through experiments on Gaussian mixture simulations and real electrocardiogram (ECG) data. Under both fidelity and utility evaluation criteria, TGDP consistently outperforms finetuning-based methods. 

Our contributions can be summarized as follows. 

\begin{itemize}[leftmargin=1.5em]
    \item  We introduce a new framework, the Transfer Guided Diffusion Process (TGDP), for transferring a pre-trained diffusion model from the source domain to the target domain.
    \item We extend TGDP to a conditional version for modeling the joint distribution of data and its corresponding labels, along with two additional regularization terms, which are important for practical applications and downstream tasks.   
    \item TGDP demonstrates superior performance over finetuning-based methods on Gaussian mixture simulations and on benchmark electrocardiogram (ECG) data.
\end{itemize}

The rest of the paper is organized as follows. Section \ref{sec:prelim} reviews the setup of generative domain adaptation and the diffusion model. Section \ref{sec:rgdp} introduces the proposed method and theoretically characterizes its effectiveness. Numerical results are given in Section \ref{sec:exp}. We conclude the paper in Section \ref{sec:conclusion}. All proofs and additional numerical experiments are deferred to the Appendix.





\section{Problem Formulation and Preliminaries}
\label{sec:prelim}

\subsection{Transfer Learning Problem Setup } 
Let $\mathcal{X}$ denote the data space and $\mathcal{Y}$ the label space. 
A domain corresponds to a joint distribution over $\mathcal{X}$ and $\mathcal{Y}$, denoted as $p_{X Y}$ for the {\it source} domain and $q_{X Y}$ for the {\it target} domain. The marginal distribution of data in the source and target domains are $p_{X}$ and $q_{X}$, respectively. Suppose we have access to $m$ (labeled) samples from the source domain
$\mathcal{S}=\left\{\left(\mathbf{x}_i, y_i\right)\right\}_{i=1}^m \sim p_{X Y}$ and $n$ (labeled) samples from the target domain $\mathcal{T}=\left\{\left(\mathbf{x}'_i, y_i'\right)\right\}_{i=1}^n \sim q_{X Y}$. Typically, the source domain contains significantly more samples than the target domain, i.e., $n\ll m$. This setup reflects the common scenario where there is limited data available for specific tasks in the target domain, while abundant data is readily accessible and stored in the source domain.


The problem of interest is as follows. 
Given a pre-trained generative model $p_\theta$ for the data distribution $p_{X}$ in the source domain, and a relatively small number of samples from the target domain, generative domain adaptation approaches aim to obtain a generative model that can generate synthetic samples following the target data distribution $q_{X}$. We will focus on diffusion generative models, given their great success in synthetic data generation. 
We first present the key idea of a carefully designed guidance network for the generation of $\mathbf{x}$ values only. Then, we extend the method to facilitate conditional generations so that we can generate paired samples with labels, $(\mathbf{x},y)$, and can incorporate downstream classification tasks on the target domain. 


\subsection{Preliminaries of Diffusion Model}\label{sub:diffusion}


Diffusion models are characterized by their forward and backward processes. For illustrative purposes, we discuss the diffusion model trained on the source domain. The forward process involves perturbing the data distribution $p_X(\mathbf{x})$ by injecting Gaussian noise, as described by the following continuous-time equation \cite{Song2021ScoreBasedGM}:
\begin{equation}\label{eq:forward}
    \mathrm{d} \mathbf{x}_t=\mathbf{f}(\mathbf{x}_t, t) \mathrm{d} t+g(t) \mathrm{d} \mathbf{w}, \ t\in[0,T],
\end{equation}
where $\mathbf{w}$ is the standard Brownian motion, $\mathbf{f}(\cdot, t): \mathbb{R}^d \rightarrow \mathbb{R}^d$ is a drift coefficient, and $g(\cdot): \mathbb{R} \rightarrow \mathbb{R}$ is a  diffusion coefficient. The marginal distribution of $\mathbf{x}_t$ at time $t$ is denoted as $p_t(\mathbf{x}_t)$, and $p_0$ is the distribution of the initial value $\mathbf{x}_0$, which equals the true data distribution $p_X(\mathbf{x})$. For notational simplicity and provided it does not cause further confusion, we will refer to this diffusion process as $p$ in the following, and we define $p(\mathbf{x}_t|\mathbf{x}_s)$, $\forall s,t$, as the conditional distribution of $\mathbf{x}_t$ given the value $\mathbf{x}_s$. Similarly, for initial value $\mathbf{x}$ following the target domain distribution, we denote the corresponding probability measure induced by the above diffusion process \eqref{eq:forward} as $q$.

Then, we can reverse the forward process \eqref{eq:forward} for generation, defined as:\begin{equation}\label{eq:backward}
\mathrm{d} \mathbf{x}_t=\left[\mathbf{f}(\mathbf{x}_t, t)-g(t)^2 \nabla_{\mathbf{x}} \log p_t(\mathbf{x})\right] \mathrm{d} t+g(t) \mathrm{d} \overline{\mathbf{w}},
\end{equation}
where $\overline{\mathbf{w}}$ is a standard Brownian motion when time flows backwards from $T$ to 0, and $\mathrm{d}t$ is an infinitesimal negative time step. 
The key of the backward process is to estimate the score function of each marginal distribution, $\nabla_{\mathbf{x}} \log p_t(\mathbf{x})$, then the generation can be performed by discretizations of \eqref{eq:backward} \cite{Ho2020DDPM,Song2021ScoreBasedGM}. Score Matching \citep{Hyvrinen2005EstimationON,Vincent2011ACB,Song2019SlicedSM} are proposed to train a neural network $\mathbf{s}_{\boldsymbol{\phi}}(\mathbf{x}_t, t)$ (parameterized by $\boldsymbol{\phi}$) to estimate the score:
\begin{equation}\label{eq:dsm}
\boldsymbol{\phi}^*= \underset{\boldsymbol{\phi}}{\arg \min } \ \mathbb{E}_t\left\{\lambda(t) \mathbb{E}_{p_t(\mathbf{x}_t)} \left[\left\|\mathbf{s}_{\boldsymbol{\phi}}(\mathbf{x}_t, t)-
\nabla_{\mathbf{x}_t} \log p_t(\mathbf{x}_t)\right\|_2^2\right]\right\},
\end{equation}
where $\lambda(t):[0, T] \rightarrow \mathbb{R}_{>0}$ is a positive weighting function, $t$ is uniformly sampled over $[0, T]$. One commonly adopted forward process is choosing an affine $\mathbf{f}(\mathbf{x},t)=-\frac12\beta(t)\mathbf{x}$ and $g(t)=\sqrt{\beta(t)}$, which yields the Gaussian transition distribution $p(\mathbf{x}_t|\mathbf{x}_s)=\mathcal{N}(\mathbf{x}_t;\sqrt{1-\beta(t)} \mathbf{x}_{s}, \beta(t) \mathbb{I})$, $t>s$, with $\beta(t): [0,T] \rightarrow (0,1)$ as a variance schedule. This is the Variance Preserving Stochastic Differential Equation (VP SDE) that we use in the numerical Section \ref{sec:exp}.


 
Several works on image generation \citep{Batzolis2021ConditionalIG, Chao2022DenoisingLS} and inverse problem \citep{ADJ2024CSB} extends Score Matching to Conditional Score Matching, i.e., 
\begin{equation}\label{eq:dsm-cond}
\boldsymbol{\phi}^*= \underset{\boldsymbol{\phi}}{\arg \min } \ \mathbb{E}_t\left\{\lambda(t) \mathbb{E}_{p_t(\mathbf{x}_t, y)} \left[\left\|\mathbf{s}_{\boldsymbol{\phi}}(\mathbf{x}_t, y, t)-
\nabla_{\mathbf{x}_t} \log p_t(\mathbf{x}_t|y)\right\|_2^2\right]\right\},
\end{equation}
where $p_t(\mathbf{x}_t|y)$ is the conditional distribution of perturbed data $\mathbf{x}_t$ given corresponding label $y$.

\section{Transfer Guided Diffusion Process}
\label{sec:rgdp}

In this section, we introduce the proposed Transfer Guided Diffusion Process (TGDP) that leverages a pre-trained diffusion model -- trained on the source domain data -- to generate data in the target domain. The proposed approach is orthogonal to and different from the existing fine-tuning type methods. 
We introduce the additional guidance in Section \ref{sec:guidance}. The methods for calculating the guidance are provided in Section \ref{sec:guidancee_net}. We extend our framework to the conditional diffusion model in Section \ref{sec:conditional} and we propose two regularization terms for enhancing the performance of our method in Section \ref{sec:add_reg}. All proofs are deferred to Appendix \ref{app:proofs}.  


\subsection{Methodology Formulation}
\label{sec:guidance}

This subsection outlines the process of transferring knowledge from a diffusion generative model pre-trained using the source domain data $\mathcal{S}$ for generating samples that match the underlying distribution of target domain sample $\mathcal{T}$. 
The simplest non-transfer type approach involves directly training a diffusion model on samples $\mathcal{T}$ from the target domain by denoising Score Matching as described by Eq \eqref{eq:dsm} or Eq \eqref{eq:dsm-cond}. 
However, since we assume only a limited amount of data is accessible on the target domain, directly learning from the target domain is unlikely to yield an effective generative model. 

Several studies propose to finetune the pre-trained diffusion model to alleviate the challenges caused by limited data and make use of acquired knowledge \citep{Moon2022FinetuningDM,Xiang2023ACL,Zhu2023DomainStudioFD}. These methods typically design different strategies, such as adapters, to avoid finetuning all weights in a pre-trained model.
However, these approaches generally use the pre-trained diffusion model from the source domain only as initial weights. Our method offers a different way for better utilization of the acquired knowledge.

Our proposed method is inspired by the key observation detailed in the following Theorem \ref{thm:IS_guidance}. Intuitively, the score function $\nabla_{\mathbf{x}_t} \log q_t(\mathbf{x}_t)$ for the target domain differs from the score function $\nabla_{\mathbf{x}_t} \log p_t(\mathbf{x}_t)$ of the source domain by a term related to the density ratio function ${q_X}/{p_X}$. We refer to this differing term as a guidance term in the following Theorem. 


\begin{theorem}\label{thm:IS_guidance}
 Consider two diffusion models on the source and target domain, denoted as $p$ and $q$, respectively. Let the forward process on the target domain be identical to that on the source domain,  $q(\mathbf{x}_t | \mathbf{x}_0)=p(\mathbf{x}_t |\mathbf{x}_0)$, and $\mathbf{s}_{\boldsymbol{\phi}^*}(\mathbf{x}_t, t)$ is the score estimator in the target domain:
    \begin{equation}\label{eq:dsm_IS}
\boldsymbol{\phi}^*= \underset{\boldsymbol{\phi}}{\arg \min } ~\mathbb{E}_t\left\{\lambda(t) \mathbb{E}_{q_t(\mathbf{x}_t)} \left[\left\|\mathbf{s}_{\boldsymbol{\phi}}(\mathbf{x}_t, t)-
\nabla_{\mathbf{x}_t} \log q_t(\mathbf{x}_t )\right\|_2^2\right]\right\},
\end{equation}
then we have
\begin{equation}\label{eq:IS_guid}
\mathbf{s}_{\boldsymbol{\phi}^*}(\mathbf{x}_t, t) = \underbrace{\nabla_{\mathbf{x}_t} \log p_t(\mathbf{x}_t)}_{\substack{\text{pre-trained model} \\ \text{on source}}} + \underbrace{\nabla_{\mathbf{x}_t} \log \mathbb{E}_{p(\mathbf{x}_0|\mathbf{x}_t)}\left[\frac{q(\mathbf{x}_0)}{p(\mathbf{x}_0)}\right]}_{\text {guidance}}.
\end{equation}
\end{theorem}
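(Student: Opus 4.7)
The plan is to exploit the standard fact that the denoising-score-matching functional in \eqref{eq:dsm_IS} is strictly convex in the output of $\mathbf{s}$ and is uniquely minimized (over all measurable $\mathbf{s}$) by the true score, so that $\mathbf{s}_{\boldsymbol{\phi}^*}(\mathbf{x}_t, t) = \nabla_{\mathbf{x}_t} \log q_t(\mathbf{x}_t)$ pointwise. Once this is invoked, the theorem reduces to establishing the deterministic identity
\[
\nabla_{\mathbf{x}_t} \log q_t(\mathbf{x}_t) \;=\; \nabla_{\mathbf{x}_t} \log p_t(\mathbf{x}_t) \;+\; \nabla_{\mathbf{x}_t} \log \mathbb{E}_{p(\mathbf{x}_0 \mid \mathbf{x}_t)}\!\left[\tfrac{q(\mathbf{x}_0)}{p(\mathbf{x}_0)}\right],
\]
so the whole proof is really about re-expressing the target marginal $q_t$ in terms of source quantities.

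First I would expand $q_t(\mathbf{x}_t) = \int q(\mathbf{x}_t \mid \mathbf{x}_0)\, q(\mathbf{x}_0)\, d\mathbf{x}_0$ via Chapman--Kolmogorov, and use the assumption that the two diffusions share the same forward kernel, $q(\mathbf{x}_t \mid \mathbf{x}_0) = p(\mathbf{x}_t \mid \mathbf{x}_0)$, to replace the target transition by the source one. Next I would multiply and divide the integrand by $p(\mathbf{x}_0)$ in order to surface the density ratio $q(\mathbf{x}_0)/p(\mathbf{x}_0)$, and then by $p_t(\mathbf{x}_t)$ to invoke Bayes' rule $p(\mathbf{x}_t \mid \mathbf{x}_0)\, p(\mathbf{x}_0)/p_t(\mathbf{x}_t) = p(\mathbf{x}_0 \mid \mathbf{x}_t)$. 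This bundles the integral into a posterior expectation under the source diffusion and yields the clean factorization
\[
q_t(\mathbf{x}_t) \;=\; p_t(\mathbf{x}_t)\, \mathbb{E}_{p(\mathbf{x}_0 \mid \mathbf{x}_t)}\!\left[\tfrac{q(\mathbf{x}_0)}{p(\mathbf{x}_0)}\right].
\]
Taking $\log$ and then $\nabla_{\mathbf{x}_t}$ on both sides gives the claimed decomposition immediately.

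The hard part, such as it is, is bookkeeping rather than analysis. I would check (i) that $p_X$ and $q_X$ share common support, so the density ratio $q(\mathbf{x}_0)/p(\mathbf{x}_0)$ is well defined almost everywhere -- this is consistent with the usual assumptions in transfer learning and is tacitly used when the pre-trained model is leveraged as a prior; (ii) that the inner expectation is strictly positive for all $(\mathbf{x}_t, t)$ of interest, so that the outer $\log$ and its gradient are well defined -- this follows because the Gaussian transition kernel of the VP SDE has full support; and (iii) that the score-matching minimizer characterization applies in the same functional class as $\mathbf{s}_{\boldsymbol{\phi}^*}$, so the minimizer identity is meaningful. Once those regularity items are noted, each remaining step is a one-line change of measure and the decomposition in \eqref{eq:IS_guid} drops out.
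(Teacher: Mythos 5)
Your proposal is correct, and it takes a genuinely more direct route than the paper. The paper never writes down the factorization $q_t(\mathbf{x}_t) = p_t(\mathbf{x}_t)\,\mathbb{E}_{p(\mathbf{x}_0\mid\mathbf{x}_t)}[q(\mathbf{x}_0)/p(\mathbf{x}_0)]$ in its proof of this theorem; instead it first proves a lemma showing that score matching on the target domain is equivalent to \emph{importance-weighted denoising score matching} on the source domain (by expanding both squared-error objectives into three terms and matching them using $q(\mathbf{x}_t\mid\mathbf{x}_0)=p(\mathbf{x}_t\mid\mathbf{x}_0)$), reads off the minimizer as the ratio of expectations $\mathbb{E}_{p(\mathbf{x}_0\mid\mathbf{x}_t)}[\nabla_{\mathbf{x}_t}\log p(\mathbf{x}_t\mid\mathbf{x}_0)\cdot q/p]\,/\,\mathbb{E}_{p(\mathbf{x}_0\mid\mathbf{x}_t)}[q/p]$, and then verifies by a separate calculation (using $\nabla_{\mathbf{x}_t}\log p(\mathbf{x}_0\mid\mathbf{x}_t)=\nabla_{\mathbf{x}_t}\log p(\mathbf{x}_t\mid\mathbf{x}_0)-\nabla_{\mathbf{x}_t}\log p_t(\mathbf{x}_t)$) that the claimed right-hand side equals this ratio. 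Your argument instead uses that the objective \eqref{eq:dsm_IS} is already an explicit regression onto $\nabla_{\mathbf{x}_t}\log q_t$, so the minimizer is the true target score, and reduces everything to the one-line change of measure $q_t = p_t\cdot\mathbb{E}_{p(\mathbf{x}_0\mid\mathbf{x}_t)}[q/p]$ followed by $\log$ and $\nabla$. Both are valid; what the paper's longer route buys is the importance-weighted DSM reformulation itself, which motivates the practical training scheme, whereas your route is shorter, makes the structural identity behind the guidance term explicit (it is in fact the same identity the paper uses later to justify the cycle regularization in Eq.\ \eqref{eq:cycle_reg}), and is more careful about the regularity conditions (common support, positivity of the inner expectation, richness of the function class) that the paper leaves implicit.
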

%
%
Based on Eq \eqref{eq:IS_guid}, instead of solving $\mathbf{s}_{\boldsymbol{\phi}^*}$ from the limited training samples on the target domain, we construct  $\mathbf{s}_{\boldsymbol{\phi}^*}$ by combing the pre-training score estimator and the guidance based on a binary classifier of source and target domain samples (detailed in Section \ref{sec:guidancee_net}). 
%
 We comment on some potential advantages of this simple yet effective idea. First of all, we do not need to fine-tune the pre-trained diffusion model on the source domain, with the corresponding computation shifted to training the guidance network which is essentially a classifier. Second, the guidance network can be effectively estimated by a domain classifier using data from both the source and target domains. There is also great flexibility in constructing this guidance network due to the extensive literature on classification problems and density ratio estimation approaches. Additionally, the sample complexity for training a generative model could be much larger than a discriminative model, since the generative model needs to recover the full spectrum of target data distribution, while a domain classifier only needs to distinguish whether the sample is from the source or target distribution. 

\subsection{Learning Guidance Network}
\label{sec:guidancee_net}

We calculate the guidance for the diffusion model on the target domain as defined in the second term of Eq \eqref{eq:IS_guid} via two steps. In the first step, we estimate the density ratio ${q(\mathbf{x}_0)}/{p(\mathbf{x}_0)}$ by training a classifier $c_{\boldsymbol{\omega}}(\mathbf{x}):\mathcal X \to [0,1]$ to distinguish samples from the source and target domains. We adopt the typical logistic loss as follows:
\begin{equation} \label{eq:logistic}
 \boldsymbol{\omega}^*= \underset{\boldsymbol{\omega}}{\arg \min }  \left\{-\frac{1}{m}\sum_{\mathbf{x}_i\sim p}\log c_{\boldsymbol{\omega}}(\mathbf{x}_i)-\frac{1}{n}\sum_{\mathbf{x}'_i\sim q}\log (1-c_{\boldsymbol{\omega}}(\mathbf{x}'_i)) \right\}.
\end{equation}
Then, the density ratio ${q(\mathbf{x}_0)}/{p(\mathbf{x}_0)}$ can be estimated as ${(1-c_{\boldsymbol{\omega}^*}(\mathbf{x}_0))}/{c_{\boldsymbol{\omega}^*}(\mathbf{x}_0)}$, and it can be shown that the optimal solution to the population counterpart of Eq \eqref{eq:logistic} is exactly the true likelihood ratio \citep{DensityRatio2012}. It is worthwhile mentioning that we may only use a subset of source domain samples to learn the classifier $c_{\boldsymbol{\omega}}$ to alleviate the unbalanced sample sizes, and we could also adopt modern density ratio estimators to improve the accuracy \cite{NEURIPS2020TRE}.
%
After learning the density ratio ${q(\mathbf{x}_0)}/{p(\mathbf{x}_0)}$, the second step is to calculate the expectation $\mathbb{E}_{p(\mathbf{x}_0|\mathbf{x}_t)}[q(\mathbf{x}_0)/p(\mathbf{x}_0)]$ using Monte Carlo simulation. Since it is hard to sample from $q(\mathbf{x}_0 | \mathbf{x}_t)$, we use the following equivalent formulation to get the value instead. This trick has also been used in previous work such as the Appendix H in \cite{Lu2023ContrastiveEP}. 
\begin{lemma}\label{thm:exact_guidance}
For a neural network $h_{\boldsymbol{\psi}}\left(\mathbf{x}_t, t\right)$ parameterized by $\boldsymbol{\psi}$, define the objective  
\begin{equation} \label{eq:guidance}
\mathcal{L}_{\text{guidance}}(\boldsymbol{\psi}) :=\mathbb{E}_{p(\mathbf{x}_0, \mathbf{x}_t)}\left[\left\|h_{\boldsymbol{\psi}}\left(\mathbf{x}_t, t\right)-\frac{q(\mathbf{x}_0)}{p(\mathbf{x}_0)}\right\|_2^2\right],
\end{equation}
then its minimizer $\boldsymbol{\psi}^* = \underset{\boldsymbol{\psi}}{\arg \min } \ \mathcal{L}_{\text{guidance}}(\boldsymbol{\psi})$ satisfies:
\[
h_{\boldsymbol{\psi}^*}\left(\mathbf{x}_t, t\right)=\mathbb{E}_{p(\mathbf{x}_0 |\mathbf{x}_t)}\left[{q(\mathbf{x}_0)}/{p(\mathbf{x}_0)}\right].
\]
\end{lemma}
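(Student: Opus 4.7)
The plan is to recognize this as the classical fact that the minimizer of a squared-error regression loss is the conditional expectation, and then verify it by completing the square in the inner expectation. I will assume (as is standard in these population-level score-matching arguments) that $h_{\boldsymbol{\psi}}$ has enough capacity that the minimizer over $\boldsymbol{\psi}$ matches the unconstrained $L^2$ minimizer.

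First, I would factor the joint distribution as $p(\mathbf{x}_0,\mathbf{x}_t)=p(\mathbf{x}_t)\,p(\mathbf{x}_0\mid\mathbf{x}_t)$ and rewrite
\[
\mathcal{L}_{\text{guidance}}(\boldsymbol{\psi})
=\mathbb{E}_{p(\mathbf{x}_t)}\!\left[\mathbb{E}_{p(\mathbf{x}_0\mid\mathbf{x}_t)}\!\left[\left\|h_{\boldsymbol{\psi}}(\mathbf{x}_t,t)-\tfrac{q(\mathbf{x}_0)}{p(\mathbf{x}_0)}\right\|_2^2\right]\right].
\]
Because $h_{\boldsymbol{\psi}}(\mathbf{x}_t,t)$ depends only on $\mathbf{x}_t$ (not on $\mathbf{x}_0$), it is constant with respect to the inner expectation. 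It therefore suffices to minimize the inner expectation pointwise for each $\mathbf{x}_t$.

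Next, abbreviate $a=h_{\boldsymbol{\psi}}(\mathbf{x}_t,t)$, $R=q(\mathbf{x}_0)/p(\mathbf{x}_0)$, and $\mu=\mathbb{E}_{p(\mathbf{x}_0\mid\mathbf{x}_t)}[R]$. Adding and subtracting $\mu$ and expanding,
\[
\mathbb{E}_{p(\mathbf{x}_0\mid\mathbf{x}_t)}\!\left[\|a-R\|_2^2\right]
=\|a-\mu\|_2^2+\mathbb{E}_{p(\mathbf{x}_0\mid\mathbf{x}_t)}\!\left[\|R-\mu\|_2^2\right],
\]
since the cross term $2(a-\mu)^\top\mathbb{E}_{p(\mathbf{x}_0\mid\mathbf{x}_t)}[R-\mu]$ vanishes by the definition of $\mu$. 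The second term is independent of $a$, so the pointwise minimum is attained exactly when $a=\mu$, i.e.\ $h_{\boldsymbol{\psi}^*}(\mathbf{x}_t,t)=\mathbb{E}_{p(\mathbf{x}_0\mid\mathbf{x}_t)}[q(\mathbf{x}_0)/p(\mathbf{x}_0)]$. Taking the outer expectation over $p(\mathbf{x}_t)$ preserves this minimizer.

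There is no real obstacle here beyond the modeling caveat: strictly speaking, we are minimizing over parameters $\boldsymbol{\psi}$ rather than over all measurable maps $\mathbf{x}_t\mapsto a(\mathbf{x}_t)$, so the identity is an equality only in the population-level, infinite-capacity sense (the same standing assumption already used for the score estimator in Eq.~\eqref{eq:dsm_IS}). I would state this assumption explicitly and then the chain of equalities above gives the result in one short computation; no additional machinery is required.
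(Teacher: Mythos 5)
Your proof is correct and follows essentially the same route as the paper's: both factor $p(\mathbf{x}_0,\mathbf{x}_t)=p(\mathbf{x}_t)p(\mathbf{x}_0\mid\mathbf{x}_t)$ and complete the square to show the objective equals $\mathbb{E}_{p(\mathbf{x}_t)}\bigl[\|h_{\boldsymbol{\psi}}(\mathbf{x}_t,t)-\mathbb{E}_{p(\mathbf{x}_0\mid\mathbf{x}_t)}[q(\mathbf{x}_0)/p(\mathbf{x}_0)]\|_2^2\bigr]$ plus a $\boldsymbol{\psi}$-independent constant. Your explicit remark about the infinite-capacity assumption is a reasonable addition that the paper leaves implicit.
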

By Lemma \ref{thm:exact_guidance}, we estimate the value $\mathbb{E}_{p(\mathbf{x}_0 |\mathbf{x}_t)}\left[{q(\mathbf{x}_0)}/{p(\mathbf{x}_0)}\right]$ using the guidance network $h_{\boldsymbol{\psi}^*}$ solved by minimizing the objective function $\mathcal{L}_{\text{guidance}}(\boldsymbol{\psi})$, which can be approximated by easy sampling from the joint distribution $p(\mathbf{x}_0, \mathbf{x}_t)$. Combine the above steps together, the estimated score function for the diffusion generative model on target domain $q_{X}$ can be calculated as follows:
\begin{equation}\label{eq:dsm_IS_sampling}
\mathbf{s}_{\boldsymbol{\phi}^*}(\mathbf{x}_t, t) = \underbrace{\nabla_{\mathbf{x}_t} \log p(\mathbf{x}_t )}_{\substack{\text {pre-trained model}\\ \text{on source}}}+ \underbrace{\nabla_{\mathbf{x}_t} \log h_{\boldsymbol{\psi}^*}\left(\mathbf{x}_t, t\right)}_{\text {guidance network}}.
\end{equation}


\subsection{Extension to the Conditional Version}
\label{sec:conditional}

The approach outlined above is for generating the sample $\mathbf{x}$ in the target domain. In this section, we extend the idea to the conditional generation task. Such extension is essential when the label sets in the source and target domain are different since, in such cases, we usually rely on the conditional diffusion model for sampling \citep{Khachatryan2023Text2VideoZeroTD, Li2023YourDM}. 
%
We first present the following theorem, which is an analog to Theorem \ref{thm:IS_guidance} within the context of conditional score matching. 
\begin{theorem}\label{thm:IS_guidance_conditional}
    Assume $\mathbf{x}_t$ and $y$ are conditional independent given $\mathbf{x}_0$ in the forward process, i.e.,  $p(\mathbf{x}_t|\mathbf{x}_0,y)=p(\mathbf{x}_t|\mathbf{x}_0)$, $\forall t\in[0,T]$, and let the forward process on the target domain be identical to that on the source domain $q(\mathbf{x}_t | \mathbf{x}_0)=p(\mathbf{x}_t | \mathbf{x}_0)$, and $\boldsymbol{\phi}^*$ is the optimal solution for the conditional diffusion model trained on target domain $q(\mathbf{x}_0, y)$, i.e.,
    \begin{equation}
\boldsymbol{\phi}^*= \underset{\boldsymbol{\phi}}{\arg \min } ~\mathbb{E}_t\left\{\lambda(t) \mathbb{E}_{q_t(\mathbf{x}_t,y)} \left[\left\|\mathbf{s}_{\boldsymbol{\phi}}(\mathbf{x}_t, y, t)-
\nabla_{\mathbf{x}_t} \log q_t(\mathbf{x}_t | y)\right\|_2^2\right]\right\},
\end{equation}
then 
\begin{equation}\label{eq:guide_conditional}
\begin{aligned}
\mathbf{s}_{\boldsymbol{\phi}^*}(\mathbf{x}_t, y, t) = \underbrace{\nabla_{\mathbf{x}_t} \log p_t(\mathbf{x}_t |  y)}_{\substack{\text {pre-trained conditional model}\\ \text{on source}}}+ \underbrace{\nabla_{\mathbf{x}_t} \log \mathbb{E}_{p(\mathbf{x}_0|\mathbf{x}_t, y)}\left[\frac{q(\mathbf{x}_0, y)}{p(\mathbf{x}_0, y)}\right]}_{\text {conditional guidance}}.
\end{aligned}    
\end{equation}
\end{theorem}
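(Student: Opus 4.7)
The plan is to mimic the proof of Theorem \ref{thm:IS_guidance} but keep the label $y$ as a conditioning variable throughout, and then absorb the marginal ratio $q(y)/p(y)$ into the constant that disappears under $\nabla_{\mathbf{x}_t}$. By the standard optimality for conditional denoising score matching, the minimizer of the objective is $\mathbf{s}_{\boldsymbol{\phi}^*}(\mathbf{x}_t, y, t) = \nabla_{\mathbf{x}_t} \log q_t(\mathbf{x}_t \mid y)$, so the whole content of \eqref{eq:guide_conditional} reduces to the identity
\[
\nabla_{\mathbf{x}_t} \log q_t(\mathbf{x}_t \mid y) = \nabla_{\mathbf{x}_t} \log p_t(\mathbf{x}_t \mid y) + \nabla_{\mathbf{x}_t} \log \mathbb{E}_{p(\mathbf{x}_0 \mid \mathbf{x}_t, y)}\!\left[\frac{q(\mathbf{x}_0, y)}{p(\mathbf{x}_0, y)}\right].
\]

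Next, I would unfold $q_t(\mathbf{x}_t \mid y) = \int q(\mathbf{x}_t \mid \mathbf{x}_0, y)\, q(\mathbf{x}_0 \mid y)\, d\mathbf{x}_0$. The hypothesis $q(\mathbf{x}_t \mid \mathbf{x}_0) = p(\mathbf{x}_t \mid \mathbf{x}_0)$ together with the conditional-independence assumption $p(\mathbf{x}_t \mid \mathbf{x}_0, y) = p(\mathbf{x}_t \mid \mathbf{x}_0)$ let me replace the forward kernel under $q$ with the forward kernel under $p$ conditioned on $y$. Multiplying and dividing by $p(\mathbf{x}_0 \mid y)$ gives
\[
q_t(\mathbf{x}_t \mid y) = \int p(\mathbf{x}_t \mid \mathbf{x}_0, y)\, p(\mathbf{x}_0 \mid y)\, \frac{q(\mathbf{x}_0 \mid y)}{p(\mathbf{x}_0 \mid y)}\, d\mathbf{x}_0,
\]
and then Bayes' rule $p(\mathbf{x}_t \mid \mathbf{x}_0, y)\, p(\mathbf{x}_0 \mid y) = p(\mathbf{x}_0 \mid \mathbf{x}_t, y)\, p_t(\mathbf{x}_t \mid y)$ lets me pull $p_t(\mathbf{x}_t \mid y)$ outside the integral, yielding
\[
\frac{q_t(\mathbf{x}_t \mid y)}{p_t(\mathbf{x}_t \mid y)} = \mathbb{E}_{p(\mathbf{x}_0 \mid \mathbf{x}_t, y)}\!\left[\frac{q(\mathbf{x}_0 \mid y)}{p(\mathbf{x}_0 \mid y)}\right].
\]

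Now I convert the conditional density ratio to the joint density ratio via $\dfrac{q(\mathbf{x}_0, y)}{p(\mathbf{x}_0, y)} = \dfrac{q(y)}{p(y)} \cdot \dfrac{q(\mathbf{x}_0 \mid y)}{p(\mathbf{x}_0 \mid y)}$, which gives
\[
\mathbb{E}_{p(\mathbf{x}_0 \mid \mathbf{x}_t, y)}\!\left[\frac{q(\mathbf{x}_0, y)}{p(\mathbf{x}_0, y)}\right] = \frac{q(y)}{p(y)} \cdot \frac{q_t(\mathbf{x}_t \mid y)}{p_t(\mathbf{x}_t \mid y)}.
\]
Taking logarithms and applying $\nabla_{\mathbf{x}_t}$ kills the $y$-only factor $q(y)/p(y)$ and delivers the desired identity.

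The genuinely new ingredient over Theorem \ref{thm:IS_guidance} is bookkeeping with the label: the conditional-independence hypothesis is exactly what I need so that the forward kernels on both domains match even conditioned on $y$, and the innocuous looking swap between $q(\mathbf{x}_0 \mid y)/p(\mathbf{x}_0 \mid y)$ and $q(\mathbf{x}_0, y)/p(\mathbf{x}_0, y)$ is the step that aligns the claim with \eqref{eq:guide_conditional}. I expect no computational obstacles; the only subtlety is being careful that the $y$-only factor really is independent of $\mathbf{x}_t$ so that the gradient step is legitimate, which it is because $y$ is held fixed throughout the expectation and the forward kernel does not involve $y$.
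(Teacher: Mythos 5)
Your proposal is correct, and it reaches \eqref{eq:guide_conditional} by a genuinely more direct route than the paper. The paper first proves Lemma \ref{DSM-equ-guidance}, converting conditional score matching on the target into importance-weighted conditional denoising score matching on the source, extracts the pointwise minimizer as the ratio
\[
\mathbf{s}_{\boldsymbol{\phi}^*}(\mathbf{x}_t,y,t)=\frac{\mathbb{E}_{p(\mathbf{x}_0|\mathbf{x}_t,y)}\left[\nabla_{\mathbf{x}_t}\log p(\mathbf{x}_t|\mathbf{x}_0)\,\tfrac{q(\mathbf{x}_0,y)}{p(\mathbf{x}_0,y)}\right]}{\mathbb{E}_{p(\mathbf{x}_0|\mathbf{x}_t,y)}\left[\tfrac{q(\mathbf{x}_0,y)}{p(\mathbf{x}_0,y)}\right]},
\]
and then shows that the right-hand side of \eqref{eq:guide_conditional} collapses to this same ratio via $\nabla_{\mathbf{x}_t}\log p(\mathbf{x}_0|\mathbf{x}_t,y)=\nabla_{\mathbf{x}_t}\log p(\mathbf{x}_t|\mathbf{x}_0)-\nabla_{\mathbf{x}_t}\log p_t(\mathbf{x}_t|y)$. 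You instead note that the stated objective is an explicit (not denoising) score-matching loss, so its unconstrained minimizer is tautologically $\nabla_{\mathbf{x}_t}\log q_t(\mathbf{x}_t|y)$, and you verify the density identity
\[
\mathbb{E}_{p(\mathbf{x}_0|\mathbf{x}_t,y)}\left[\frac{q(\mathbf{x}_0,y)}{p(\mathbf{x}_0,y)}\right]=\frac{q(y)}{p(y)}\cdot\frac{q_t(\mathbf{x}_t|y)}{p_t(\mathbf{x}_t|y)}
\]
by a Bayes-rule computation, after which the $y$-only prefactor dies under $\nabla_{\mathbf{x}_t}\log$. This is essentially the conditional analogue of the paper's own derivation of the cycle identity \eqref{eq:cycle_reg}, so your argument is shorter and arguably cleaner for proving the theorem as stated. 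What the paper's longer detour buys is the importance-weighted reformulation itself, which is the fact that justifies training the guidance network from source-domain samples in practice; your proof does not deliver that byproduct. One small point worth making explicit: your kernel-swapping step uses conditional independence on the target side, $q(\mathbf{x}_t|\mathbf{x}_0,y)=q(\mathbf{x}_t|\mathbf{x}_0)$, whereas the theorem states it only for $p$; the paper's proof invokes the same fact for $q$ ``by assumption,'' so you are no worse off, but it deserves a sentence.
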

The key difference is we need to estimate the joint density ratio between the source and target domain. We can extend the density ratio estimator in Section \ref{sec:guidancee_net} for estimating joint density ratio, i.e., also feed the label $y$ into the classifier $c_{\boldsymbol{\omega}}(\mathbf{x},y)$.
The corresponding Lemma and its proof for the conditional version of Lemma \ref{thm:exact_guidance} can be found in Appendix \ref{ap:conditional_exact_guidance}. We further provide a detailed discussion about how to extend this conditional guidance to text-to-image generation tasks and when the source and target domain contain different class labels in Appendix \ref{ap:potential}.

\subsection{Additional Regularizations in Practical Implementations}\label{sec:add_reg}

In this subsection, we provide two additional regularization terms in our final objective function, to enhance the performance of the proposed scheme. 

\paragraph{Cycle Regularization}
In the approaches described above, after obtaining the classifier network $c_{\boldsymbol{\omega}^*}$, calculation of the additional guidance $\nabla_{\mathbf{x}_t} \log \mathbb{E}_{p(\mathbf{x}_0|\mathbf{x}_t)}[{q(\mathbf{x}_0)}/{p(\mathbf{x}_0)}]$(or $\nabla_{\mathbf{x}_t} \log \mathbb{E}_{p(\mathbf{x}_0|\mathbf{x}_t, y)}[{q(\mathbf{x}_0, y)}/{p(\mathbf{x}_0, y)}]$ for conditional generation) {\it only} utilizes the data from source domain. In this section, we provide an enhancement in which the limited data from the target domain can also be utilized to improve the training of the guidance network $h_{\boldsymbol{\psi}}$. 

Notice that (with detailed derivation given in Appendix \ref{ap:proof_cycle_regularization})
\begin{equation}\label{eq:cycle_reg}
\mathbb{E}_{p(\mathbf{x}_0|\mathbf{x}_t)}\left[\frac{q(\mathbf{x}_0)}{p(\mathbf{x}_0)}\right] = \mathbb{E}_{q(\mathbf{x}_0|\mathbf{x}_t)}\left[\frac{q_t(\mathbf{x}_t)}{p_t(\mathbf{x}_t)}\right],
\end{equation}
where recall $p_t(\mathbf{x}_t)$ and $q_t(\mathbf{x}_t)$ are the marginal distribution at time $t$ for source and target distributions, respectively. A similar idea to Theorem \ref{thm:exact_guidance} implies that we can learn the guidance network by solving the following optimization problem as well:
\begin{equation}\label{eq:cycle}
\boldsymbol{\psi}^* = \underset{\boldsymbol{\psi}}{\arg \min }\ \mathcal{L}_{\text{cycle}}:=\mathbb{E}_{q\left(\mathbf{x}_0, \mathbf{x}_t\right)}\left[\left\|h_{\boldsymbol{\psi}}\left(\mathbf{x}_t, t\right)-\frac{q_t(\mathbf{x}_t)}{p_t(\mathbf{x}_t)}\right\|_2^2\right].
\end{equation}



Moreover, in order to estimate the density ratio for marginal distributions at time $t$ between the target and source data distribution, we train a time-dependent classifier $c_{\boldsymbol{\omega}}(\mathbf{x},t)$ to distinguish samples from source domain $p_t(\mathbf{x})$ and target domain $q_t(\mathbf{x})$ by the logistic loss as follow: 
\[
 \boldsymbol{\omega}^*= \underset{\boldsymbol{\omega}}{\arg \min }  \left\{-\frac{1}{m}\sum_{\mathbf{x}_0\sim p}\sum_{\mathbf{x}_t |\mathbf{x}_0}\log c_{\boldsymbol{\omega}}(\mathbf{x}_t,t) -\frac{1}{n}\sum_{\mathbf{x}_0\sim q}\sum_{\mathbf{x}_t |\mathbf{x}_0}\log (1-c_{\boldsymbol{\omega}}(\mathbf{x}_t,t)) \right\},
\]
where $m,n$ are the number of training samples in source and target domains. The density ratio $q_t(\mathbf{x}_t)/p_t(\mathbf{x}_t)$ can then be calculated by $(1-c_{\boldsymbol{\omega}^*}(\mathbf{x}_t,t))/(c_{\boldsymbol{\omega}^*}(\mathbf{x}_t,t))$.

\paragraph{Consistency Regularization}
\label{sec:consis_reg}


Motivated by the fact that an optimal guidance network should recover the score in the target domain, we further use score matching in the target domain as the Consistency Regularization $\mathcal{L}_{\text{consistence}}$ to learn the guidance network better. 
\begin{align}\label{eq:consistence}
\boldsymbol{\psi}^* &= \underset{\boldsymbol{\psi}}{\arg \min }\ \mathcal{L}_{\text{consistence}}\notag\\
&:= \mathbb{E}_t\left\{\lambda(t) \mathbb{E}_{q(\mathbf{x}_0)} \mathbb{E}_{q(\mathbf{x}_t|\mathbf{x}_0)}\left[\left\|\nabla_{\mathbf{x}_t} \log p(\mathbf{x}_t | \mathbf{x}_0)+ \nabla_{\mathbf{x}_t} \log h_{\boldsymbol{\psi}}\left(\mathbf{x}_t, t\right)-
\nabla_{\mathbf{x}_t} \log q(\mathbf{x}_t | \mathbf{x}_0)\right\|_2^2\right]\right\}.
\end{align}

Combining these two additional regularization terms together with the original guidance loss \eqref{eq:guidance}, the final learning objective for the guidance network can be described as follows:
\begin{equation}\label{eq:final_obj}
\boldsymbol{\psi}^* = \underset{\boldsymbol{\psi}}{\arg \min } \ \{\mathcal{L}_{\text{guidance}}+ \eta_1 \ \mathcal{L}_{\text{cycle}} + \eta_2 \ \mathcal{L}_{\text{consistence}} \},  
\end{equation}
where $\eta_1, \eta_2 \geq 0$ are hyperparameters that control the strength of additional regularization, which also enhances the flexibility of our solution scheme. We summarize the Algorithm of TGDP in Appendix \ref{app:alg_box}. We provide the ablation studies that demonstrate the effectiveness of these two regularizations in Appendix \ref{ap:regu} and we also empirically show only adopt $\ \mathcal{L}_{\text{consistence}}$ to optimize the guidance network is not good enough because of the limited data from the target distribution.





\begin{remark}[Discussion about related guidance]
Classifier guidance has become a common trick in recent research \citep{Song2021ScoreBasedGM, Dhariwal2021DiffusionMB, Bansal2023UniversalGF, Chung2022DiffusionPS}. We highlight that, under the transfer learning framework, the guidance proposed in our work is the optimal guidance since the resulting score function matches the oracle score on the target domain. On the contrary, vanilla versions of classifier guidance utilizing a domain classifier cannot generate samples that exactly follow target distribution. Indeed, 
for a pre-trained domain classifier $c_{\boldsymbol{\omega}}$, vanilla domain classifier guidance formulates the source for generation as follows:
\[
\begin{aligned}
\mathbf{s}_{\boldsymbol{\phi}^*}(\mathbf{x}_t, t) &=\nabla_{\mathbf{x}_t} \log p(\mathbf{x}_t )- \nabla_{\mathbf{x}_t}  \mathbb{E}_{p(\mathbf{x}_0|\mathbf{x}_t)}\left[\log (1-c_{\boldsymbol{\omega}}(\mathbf{x}_0))\right]\\
&\neq \nabla_{\mathbf{x}_t} \log p(\mathbf{x}_t )+ \nabla_{\mathbf{x}_t} \log \mathbb{E}_{p(\mathbf{x}_0 |\mathbf{x}_t)}\left[\frac{q(\mathbf{x}_0)}{p(\mathbf{x}_0)}\right] \quad \text{(correct form proven in Theorem \ref{thm:IS_guidance})}\\
&= \nabla_{\mathbf{x}_t} \log p(\mathbf{x}_t )+ \nabla_{\mathbf{x}_t} \log \mathbb{E}_{p(\mathbf{x}_0|\mathbf{x}_t)}\left[\frac{1-c_{\boldsymbol{\omega}}(\mathbf{x}_0))}{c_{\boldsymbol{\omega}}(\mathbf{x}_0)}\right].
\end{aligned}
\]
\end{remark}



\section{Experiments}
\label{sec:exp}

In this section, we present empirical evidence demonstrating the efficacy of the proposed Transfer Guided Diffusion Process (TGDP) on limited data from a target domain. In Section \ref{exp:simulation}, we conduct proof-of-concept experiments using a Gaussian mixture model to showcase that the guidance network of TGDP can successfully steer the pre-trained diffusion model toward the target domain. In Section \ref{exp:ecg}, we illustrate the effectiveness of TGDP using a real-world electrocardiogram (ECG) dataset. 

\subsection{Simulation Results}
\label{exp:simulation}

\paragraph{Experimental setup}

We begin with a Gaussian mixture model where $\mathcal{X}=\mathbb{R}^d$ and $\mathcal{Y}=\{-1,1\}$. On both domains, the marginal distribution for label $y$ is uniform in $\mathcal{Y}$, and the conditional distribution of features is $\mathbf{x}|y\sim \mathcal{N}(y\boldsymbol{\mu},\sigma^2 \mathbb{I}_d)$, where $\boldsymbol{\mu}\in \mathbb{R}^d$ is non-zero, and $\mathbb{I}_d$ is the $d$ dimensional identity covariance matrix. We let $\boldsymbol{\mu}=\boldsymbol{\mu}_{\rm S}$ on the source domain and $\boldsymbol{\mu}=\boldsymbol{\mu}_{\rm T}$ on the target domain, with $(\boldsymbol{\mu}_{\rm S})\top\boldsymbol{\mu}_{\rm T}=0$.
Under such case, the marginal distribution of $\mathbf{x}$ on the source domain $p_{X}$ is a Gaussian mixture, for convenience we denote it as $0.5\mathcal{N}({\boldsymbol{\mu}_{\rm S}},\sigma^2 \mathbb{I})+0.5 \mathcal{N}(-{\boldsymbol{\mu}_{\rm S}},\sigma^2 \mathbb{I})$, and the marginal feature of target distribution $q_{X}$ is $0.5\mathcal{N}({\boldsymbol{\mu}_{\rm T}},\sigma^2 \mathbb{I})+0.5 \mathcal{N}(-{\boldsymbol{\mu}_{\rm T}},\sigma^2 \mathbb{I})$. We let $d=2, \boldsymbol{\mu}_{\rm S}=[0.5,0.5], \boldsymbol{\mu}_{\rm T}=[0.5,-0.5],\sigma^2=0.1$, and draw $m=10000$ samples from source domain $p_{X}$, and $n=10$, $100$, $1000$ samples from target domain $q_{X}$, respectively. 


\paragraph{Implementation details and Baselines}
We adopt the default Variance Preserving (VP) SDE in \cite{Song2021ScoreBasedGM} with a linear schedule, i.e., $q(\mathbf{x}_t | \mathbf{x}_0)=p(\mathbf{x}_t |\mathbf{x}_0)=\mathcal{N}\left(\mathbf{x}_t| \alpha_t \mathbf{x}_0, \sigma_t^2 \boldsymbol{I}\right)$ with $\alpha_t$ and $\sigma_t$ being:
$$
\alpha_t=-\frac{\beta_1-\beta_0}{4} t^2-\frac{\beta_0}{2} t, \quad \sigma_t=\sqrt{1-\alpha_t^2}, 
$$with  $\beta_0=0.1, \beta_1=20$. We adopt 5-layer MLP with hidden sizes of $[512,512,512,512,256]$ and SiLU activation function as the diffusion model. We train the diffusion model on data from the source domain for 100 epochs using the Adam optimizer with a learning rate of $1 \mathrm{e}^{-4}$ and batch size of 4096. The guidance network is a 4-layer MLP with 512 hidden units and SiLU activation function. We train the guidance network 20 epochs for our TGDP and train a vanilla diffusion model or finetune the diffusion model target domain 50 epochs. For generation, we adopt DPM-Solver \citep{Lu2022DPMSolverAF} with a second-order sampler and a diffusion step of 25.
%
We compare TGDP with the following baseline methods: 1) Vanilla Diffusion: directly training from target domain; 2) Finetune Diffusion: finetuning all weights of a pre-trained diffusion model on target distribution \footnote{It is worthwhile mentioning that the reason we do not compared with the works that finetune partial weights in a pre-trained diffusion model \citep{Xie2023DiffFitUT} is their results are usually worse or comparable with method that finetunes all weights, the implementation of \citep{Moon2022FinetuningDM} are not available, and the regularization proposed by \citep{Zhu2022FewshotIG} is only valid for image data.}.

\paragraph{Experimental results}   
We first demonstrate the effectiveness of guidance in Figure \ref{fig:simulation} under the above setup. Figure \ref{fig:simulation} (a) plots the source samples, while Figure \ref{fig:simulation} (b) shows the target samples under different sample sizes $n=10$, $100$, $1000$. Figure \ref{fig:simulation} (c-e) illustrates the generated target samples via different methods, respectively. It can be seen that the samples generated via the proposed TGDP approach share similar patterns with the target distribution and two mixture components are more obvious as compared with other baseline methods. Furthermore, since the true data distribution of the target domain is known, we calculate the average likelihood of samples generated by each method as demonstrated in Table \ref{tab:likelihood} for quantitative evaluation and comparison.


\begin{figure*}[h!]
\centering
  \includegraphics[width=0.9\textwidth]{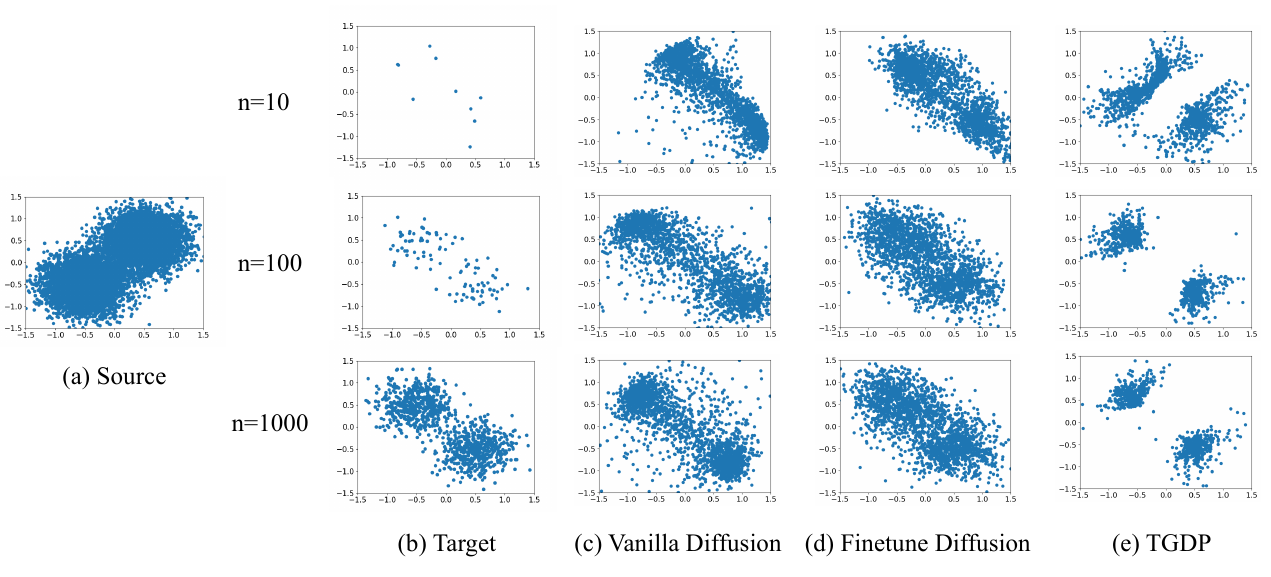}
  \caption{An illustration of the effectiveness of TGDP on simulations with 10/100/1000 target samples, respectively.}
  \label{fig:simulation}
\end{figure*}

\begin{table}[htbp]
\centering
\caption{Quantitative evaluation of TGDP on simulations. Training on 10K samples from the source domain and $n=10,100,1000$ numbers on the target domain, respectively. TGDP achieves the highest average likelihood under target distribution.}
 \begin{tabular}{c|ccc}
\toprule  &  \multicolumn{3}{c}{Average likelihood}  \\
& \multicolumn{1}{l}{n=10} & \multicolumn{1}{l}{n=100}  & \multicolumn{1}{l}{n=1000}  \\
\midrule
Vanilla Diffusion   & 0.145 & 0.253 & 0.328 \\ 
Finetune &0.290 &0.329 & 0.335\\  
TGDP &  \textbf{0.417} &\textbf{0.627} & \textbf{0.673}\\ 
\bottomrule
\end{tabular}
\label{tab:likelihood}
\end{table}


As a sanity check, we also look at the sensitivity of the learned density ratio estimator (through the classifier network \eqref{eq:logistic}) regarding different sizes of target samples. As shown in Figure \ref{ab:densityratio}, even with only 10 samples from the target domain (and 10 samples from the source domain for class balance sampling), we can accurately estimate the landscape of density ratio (although the magnitude of the estimated ratio is not entirely accurate when the number of target samples equal 10). 

\begin{figure}[th]
\centering
\subfigure[Oracle]{\includegraphics[width=0.22\textwidth]{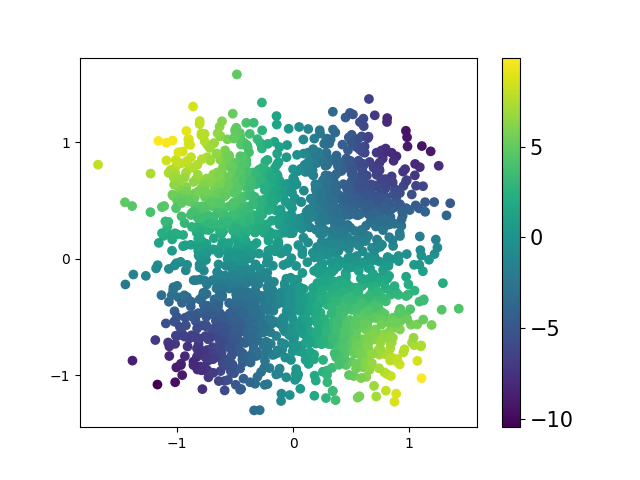}}
\subfigure[10 target]{\includegraphics[width=0.22\textwidth]{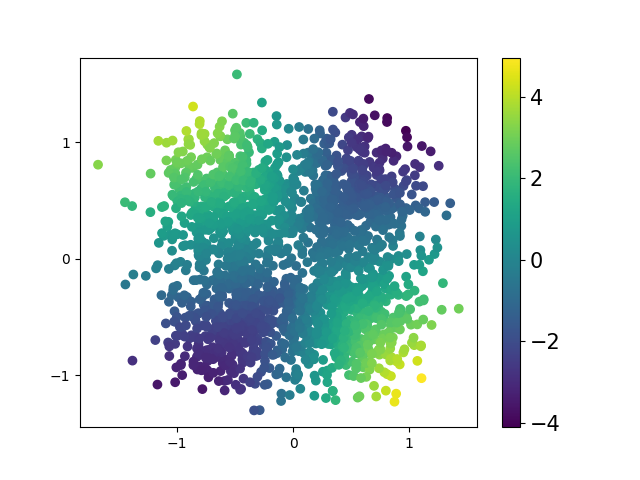}}
\subfigure[100 target]{\includegraphics[width=0.22\textwidth]{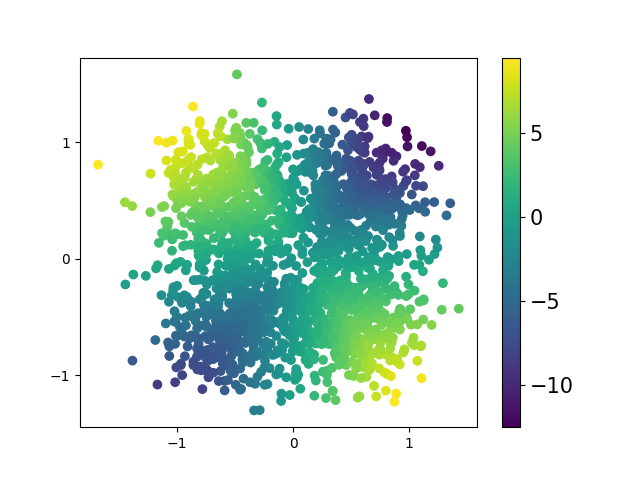}}
\subfigure[1000 target]{\includegraphics[width=0.22\textwidth]{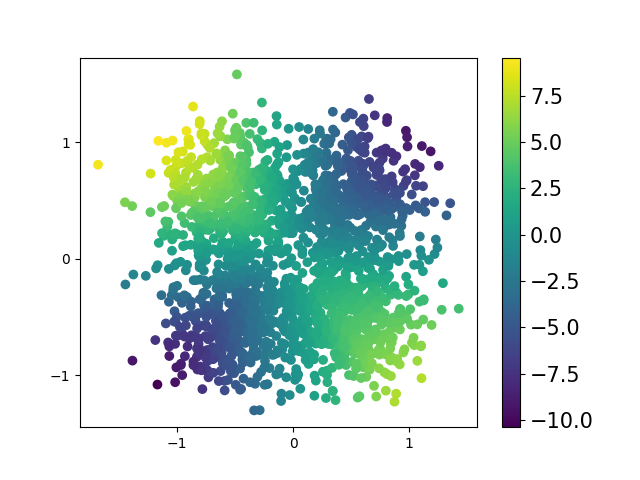}}
\caption{An ablation study of the sensitivity of density ratio estimator.}
\label{ab:densityratio}
\end{figure}

\subsection{ECG Data}
\label{exp:ecg}
In this section, we demonstrate the effectiveness of the proposed guidance on the benchmark of electrocardiogram (ECG) data. We first provide the standard synthetic quality and diversity evaluation in Section \ref{sec:quality}. Then, we utilize downstream classification tasks to further evaluate the effectiveness of TGDP in Section \ref{sec:downstream}.
%
We follow the setup of existing benchmarks on biomedical signal processing \citep{Strodthoff2020DeepLF} that regard PTB-XL dataset \cite{Wagner2020PTBXLAL} as the source domain and ICBEB2018 dataset \citep{Ng2018AnOA} as the target domain. PTB-XL dataset contains 21,837 clinical 12-lead ECG recordings of 10 seconds length from 18,885 unique patients. A 12-lead ECG refers to the 12 different perspectives of the heart's electrical activity that are recorded. Moreover, the PTB-XL dataset is a multi-label dataset with 71 different statements (label). ICBEB2018 dataset \citep{Ng2018AnOA} comprises 6877 12-lead ECGs lasting between 6 and 60 seconds. Each ECG record is categorized into one of nine classes, which is a subset of labels in the PTB-XL dataset. We randomly select 10\% samples as limited target distribution by stratified sampling preserving the overall label distribution in each fold following \citep{Wagner2020PTBXLAL}. We use the data from PTB-XL dataset and ICBEB2018 dataset at a sampling frequency of 100 Hz, which means 100 samples per second. We include more implementation details in Appendix \ref{ap:imple_detail}.

\subsubsection{Synthetic Quality and Diversity Evaluation}
\label{sec:quality}

\paragraph{Baseline method} 
We compare TGDP with the following baseline methods to demonstrate the effectiveness of TGDP. 1) Learn a generative model directly {\it(Vanilla Diffusion)}: The vanilla way is to learn a generative model directly on limited samples from the target domain. 2) Leveraging the pre-trained generative model from source domain {\it(Finetune Generator}): Since the label set of the target domain is a subset of that in the source domain, a preliminary solution is to utilize the pre-trained diffusion model to generate samples with labels in the target domain.




\paragraph{Experimental results}

In Table \ref{tab:acc_fid_div}, we compare the generation performance on the target domain using two metrics. The first criterion is the widely used Frechet Inception Distance (FID) \citep{Heusel2017GANsTB} to evaluate the quality of synthetic data, which calculates the Wasserstein-2 distance between the real data and the synthetic data on the feature space. We use the pre-trained classifier on the target domain as the feature extractor, i.e., xresnet1d50 \cite{Strodthoff2020DeepLF}. The second metric is the coverage \citep{Naeem2020ReliableFA} that evaluates the diversity of the synthetic data. It is defined as the ratio of real records that have at least one fake (synthetic) record in its sphere. The higher the coverage is, the more diverse the synthetic data are.

From Table \ref{tab:acc_fid_div}, we see that TGDP achieves better performance than baseline methods on two criteria, which demonstrates the effectiveness of TGDP on generative transfer learning in scenarios with limited data. Moreover, TGDP has fewer parameters to be trained and less training time. We also demonstrate the T-SNE of the generated ECG data in Figure \ref{fig:tsne}. 

\begin{figure}[th]
\centering
\subfigure[Vanilla Diffusion]{\includegraphics[width=0.23\textwidth]{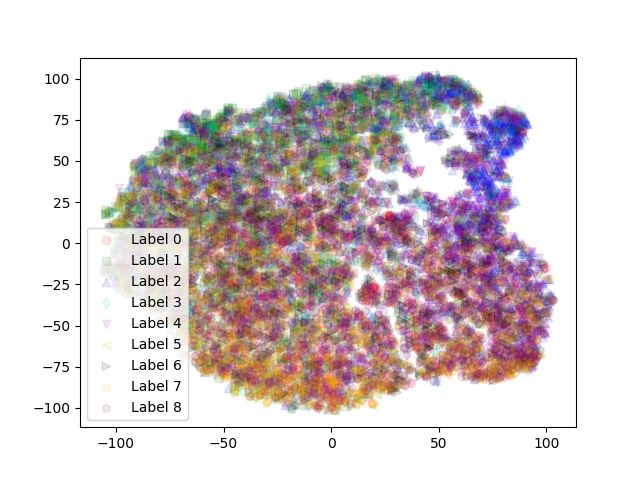}}
\subfigure[Finetune Generator]{\includegraphics[width=0.23\textwidth]{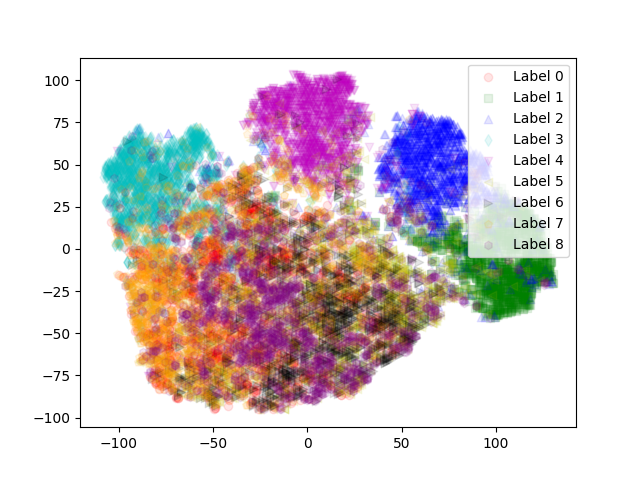}}
\subfigure[TGDP]{\includegraphics[width=0.23\textwidth]{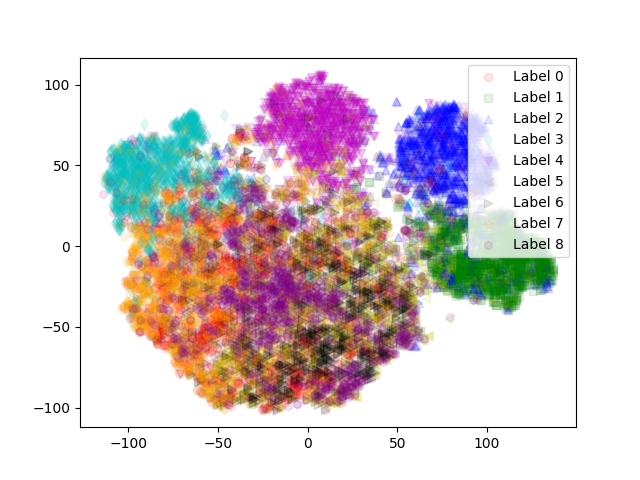}}
\subfigure[Real]{\includegraphics[width=0.23\textwidth]{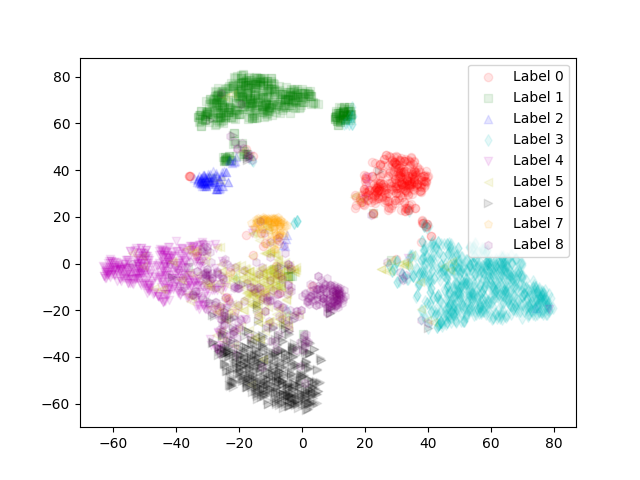}}
\caption{T-SNE of the generated ECG data.}
\label{fig:tsne}
\end{figure}

\begin{table}[htbp]
\centering
 \caption{The effectiveness of TGDP on ECG benchmark under synthetic quality and diversity criteria.} 
\begin{tabular}{c|c|c|c|c}
\toprule  Method   & Diversity ($\uparrow$) & FID ($\downarrow$)& Number of Parameters & Training Time \\
\midrule
Vanilla  Diffusion   &    0.37 & 11.01 &  50.2M&  1h\\ 
Finetune Generator &  0.47  &12.26 & 50.2M & 40min \\  %
TGDP   &  \textbf{0.53} & \textbf{10.46} &  \textbf{2.8M}  & 30min \\ 
\bottomrule
\end{tabular}
\label{tab:acc_fid_div}
\end{table}





\subsubsection{TGDP for Downstream Task} \label{sec:downstream}

In Section \ref{sec:quality}, we illustrate that TGDP is capable of generating samples that adhere to the joint distribution of data and labels in the target domain and is diverse enough. In this subsection, we further investigate whether utilizing TGDP to acquire a generative model for the target domain yields superior performance compared to existing transfer learning pipelines. 

\paragraph{Baseline method} First of all, we can utilize the generative model learned in Section \ref{sec:quality} to generate sufficient samples. Incorporated with the original limited sample from the target domain, we can train the classifier, which we still denoted as {\it Vanilla Diffusion}, {\it Finetune Generator}, and TGDP, respectively. Moreover, we have the following baseline methods. Directly train a classifier on target domain {\it (Vanilla Classifier)}: Utilizing the limited data from the target domain, a vanilla classifier can be obtained. Finetune pre-trained classifier {\it (Finetune Classifier)}: Instead of training a classifier from scratch on the target domain, the parameters of the classifier trained on the source domain are adjusted by using the limited data from the target domain. To verify the effectiveness of the generative model, we demonstrate that it improves the performance of the learned classifier in the following.





\paragraph{Experimental results} 

We adopt the same evaluation criteria as ECG benchmark \cite{Strodthoff2020DeepLF}, i.e., Macro-averaged area under the receiver operating characteristic curve (AUC), Macro-averaged $F_\beta$-score $(\beta=2)$, where $F_\beta=\frac{\left(1+\beta^2\right) \cdot \text { TP}}{\left(1+\beta^2\right) \cdot \text { TP }+\beta^2 \cdot \text { FN }+ \text { FP }}$, and Macro-averaged $G_\beta$-score with $\beta=2$, where $G_\beta=$ $ \frac{\text { TP} }{ \text { TP}+\text { FP }+\beta \cdot \text { FN }}$.
In Table \ref{tab:utility_newest}, TGDP outperforms baseline methods across three evaluation criteria, showcasing its effectiveness in transfer for diffusion model with limited data.



\begin{table}[htbp]
\centering
 \caption{The effectiveness of TGDP on ECG benchmark for downstream task. We provide 95\% confidence intervals via empirical bootstrapping used by \cite{Strodthoff2020DeepLF}. 0.906(03) stands for 0.906 ± 0.003.}

\begin{tabular}{c|c|c|c}
\toprule  Method & AUC & $F_{\beta=2}$  &  $G_{\beta=2}$  \\
\midrule
Vanilla Classifier         & 0.906(03)   &    0.674(06)     & 0.433(06)    \\
Finetune Classifier     &  0.941(05) & 0.747(08) & 0.521(10)  \\
\midrule
Vanilla Diffusion     &  0.932(05) &  0.718(09) & 0.464(09) \\ 
Finetune Generator & 0.941(04)  & 0.761(10) & 0.528(12) \\ 
TGDP &  \textbf{0.953}(05) & \textbf{0.773}(11) &  \textbf{0.534}(11) \\ 
\bottomrule
\end{tabular}
\label{tab:utility_newest}
\end{table}




\section{Conclusion}
\label{sec:conclusion}

In this work, we propose a novel framework, Transfer Guided Diffusion Process (TGDP), for transferring a source-domain diffusion model to the target domain which consists of limited data. Instead of reducing the finetuning parameters or adding regularization for finetuning, TGDP proves the optimal diffusion model on the target domain is the pre-trained diffusion model on the source domain with additional guidance. 
TGDP outperforms existing methods on Gaussian mixture simulations and electrocardiogram (ECG) data benchmarks.
\paragraph{Limitations and broader impact}
Overall, this research presents a promising direction for leveraging pre-trained diffusion models to tackle new tasks. The proposed method, TGDP, has potential applications in a wide range of tasks where domain shift exists. A limitation of this study is the lack of empirical validation regarding TGDP's performance on language vision tasks, which we have earmarked for future exploration. Since we propose a generic algorithm for transferring knowledge to new tasks, this technique could enable people to train Deepfakes for disinformation better. Our approach hinges on the efficacy of detection methods in mitigating negative societal consequences.

\section{Acknowledgement}
Hongyuan Zha was supported in part by the Shenzhen Key Lab of Crowd Intelligence Empowered Low-Carbon Energy Network (No.
ZDSYS20220606100601002). Guang Cheng was partially sponsored by NSF – SCALE MoDL (2134209), NSF – CNS (2247795), Office of Naval Research (ONR N00014-22-1-2680) and CISCO and Optum AI Research Grants.


\newpage
\appendix
\onecolumn

\section{More Discussion on Related Work}\label{app:related}

\paragraph{Finetune diffusion model on limited data}
Directly finetuning the pre-trained generative model on limited data from the target domain may suffer from overfitting and diversity degradation. Moon et. al. \citep{Moon2022FinetuningDM} introduce a time-aware adapter inside the attention block. Since the attention modules take about 10\% of parameters in the
entire diffusion model, they significantly reduced the turning parameters and alleviated the overfitting. While in \cite{Xie2023DiffFitUT}, the authors only finetune specific parameters related to bias, class embedding, normalization, and
scale factor. Zhu et. al. \citep{Zhu2022FewshotIG} found out the images generated by directly finetuned diffusion models share similar features like facial expressions and lack ample high-frequency details. Therefore, they introduce two regularization terms, pairwise similarity
loss for diversity and high-frequency components loss to enhance the high-frequency feature.

The main drawback of finetuning the pre-trained diffusion model is the sample complexity is larger compared with training a classifier since modeling the distribution is a tougher task. In our work, we decompose the diffusion model on the target domain as the diffusion model on the source domain plus a guidance network. Since training a guidance network (essential as a classifier demonstrated in section \ref{sec:rgdp}) requires smaller sample complexity, we believe this novel framework might provide a new way for diffusion-based domain adaptation.

\paragraph{Text-to-image diffusion model and learning with human feedback}
Numerous studies on Text-to-Image diffusion models focus on optimizing the diffusion model to align with human preferences and personalize its performance for specific tasks. These endeavors commonly involve strategies such as text-guided zero-shot finetuning \citep{Song2022DiffusionGD, Radford2021LearningTV} or finetuning diffusion model (or its adaptor) through reward-weighted objectives \citep{Ruiz2022DreamBoothFT, Kumari2022MultiConceptCO, Gal2022AnII, Lee2023AligningTM, Fan2023DPOKRL}. We acknowledge the significant potential in these approaches, given that language models inherently encapsulate rich semantic information, thereby endowing text-to-image diffusion models with zero-shot transferability. However, it is noteworthy that in domains lacking a substantial amount of paired data for learning semantic mappings, such as biomedical signal processing and electrocardiogram (ECG) data, we refrain from considering these methods as the primary benchmarks in our comparative analysis.

\paragraph{Non-diffusion based approaches in generative domain adaptation}
Numerous works in generative domain adaptation (or few-shot generative adaptation) study how to improve the transferability of the generative model on limited data from the target domain. Since we mainly focus on the diffusion model, we summarize the primary GAN-based domain adaptation there. They mainly propose to add different kinds of regularization to avoid model collapse \citep{Ojha2021FewshotIG, Zhao2022ACL, Zhang2022GeneralizedOD, Zhang2022TowardsDA, Duan2023WeditGANFI, Hou2022DynamicWS, Xiao2022FewSG} or finetune subset of the parameter (adaptor) \citep{Alanov2022HyperDomainNetUD, Yang2021OneShotGD, Li2020FewshotIG, Zhao2020OnLP}.

\section{More Discussion on the Potential of the Proposed Method}\label{ap:potential}

In this section, we demonstrate the proposed framework is general enough to deal with text-to-image generation tasks and homogeneous transfer learning.

\subsection{Text-to-Image Generation Tasks}

Given a source distribution $(x, \boldsymbol{c}_t) \sim p$, where $\boldsymbol{c}_t$ denotes text prompts by using the terminology from \cite{Zhang2023AddingCC}, a pre-trained diffusion model can be trained on the source distribution. Given a target distribution $(x, \boldsymbol{c}_t, \boldsymbol{c}_{\mathrm{f}}) \sim q$, where $\boldsymbol{c}_f$ denotes a task-specific condition, Zhang et al. \citep{Zhang2023AddingCC} can fine-tune the pre-trained model by noise matching objective,
$$
\mathcal{L}=\mathbb{E}_{\boldsymbol{x}_0, \boldsymbol{t}, \boldsymbol{c}_t, \boldsymbol{c}_{\mathrm{f}}, \epsilon \sim \mathcal{N}(0,1)}\left[\| \epsilon-\epsilon_\theta\left(\boldsymbol{x}_t, \boldsymbol{t}, \boldsymbol{c}_t, \boldsymbol{c}_{\mathrm{f}}\right) \|_2^2\right].
$$

Our method can directly estimate $\nabla_{\mathbf{x}_t} \log \mathbb{E}_{p(\mathbf{x}_0|\mathbf{x}_t, \boldsymbol{c}_t)}\left[\frac{q(\mathbf{x}_0, \boldsymbol{c}_t, \boldsymbol{c}_{\mathrm{f}})}{p(\mathbf{x}_0, \boldsymbol{c}_t)}\right]$ rather than fine-tuning the pre-trained diffusion model. Domain classifier $c_w(x, y)$ can still be used for estimating the density ratio, where $y$ denotes the embedding of the condition. Moreover, directly fine-tuning the diffusion model on data from the target domain used by \cite{Zhang2023AddingCC} is similar to the consistency regularization proposed in our work, while they have a more in-depth design for the architecture and have great results on vision-language tasks. However, with limited data from the target distribution, direct fine-tuning may not achieve good enough performance, which is verified in the two-dimensional Gaussian setting. In \cite{Zhang2023AddingCC}, Zhang et al. propose to use zero convolution layers, i.e., 1 × 1 convolution layer with both weight and bias initialized to zeros, which alleviates the instability of fine-tuning process. This is very different from our methodology which relies on the smaller sample complexity of the classifier/density ratio estimator.

\subsection{Homogeneous Transfer Learning}
When the source and target domain contain different class labels, our framework is still applicable, i.e., when $y_t\neq y_s$, $$\underbrace{\mathbf{s}_{\boldsymbol{\phi}^*}(\mathbf{x}_t, y_{t}, t)}_{\substack{\text {target source}}} = \underbrace{\nabla_{\mathbf{x}_t} \log p_t(\mathbf{x}_t |  y_s)}_{\substack{\text {pre-trained conditional model}\\ \text{on source}}}+ \underbrace{\nabla_{\mathbf{x}_t} \log \mathbb{E}_{p(\mathbf{x}_0|\mathbf{x}_t, y_s)}\left[\frac{q(\mathbf{x}_0, y_t)}{p(\mathbf{x}_0, y_s)}\right]}_{\text {conditional guidance}}.$$ To generate an unseen class $y_t$, the key problem here is to choose a particular class from the source domain $y_s$ such that we can borrow useful information from the source domain to generate this unseen class from the target domain. The coupling between $y_t$ and $y_s$ can be learned by solving a static optimal transport problem. More in-depth design, e.g. coupling solved by static optimal transport, can be left to future work.

\section{Theoretical Details for Section \ref{sec:rgdp}} \label{app:proofs}

\subsection{Proof of Theorem \ref{thm:IS_guidance}}\label{ap:proof_thm_ISguidance}
\begin{proof}
To prove Eq \eqref{eq:IS_guid}, we first build the connection between Score Matching on the target domain and Importance Weighted Denoising Score Matching on the source domain in the following Lemma.
\begin{lemma}
\label{DSM-equ}
Score Matching on the target domain is equivalent to Importance Weighted Denoising Score Matching on the source domain, i.e.,  
\begin{equation}\label{eq:appendix1}
\begin{aligned}
\boldsymbol{\phi}^*= &\underset{\boldsymbol{\phi}}{\arg \min } \ \mathbb{E}_t\left\{\lambda(t) \mathbb{E}_{q_t(\mathbf{x}_t)} \left[\left\|\mathbf{s}_{\boldsymbol{\phi}}(\mathbf{x}_t, t)-
\nabla_{\mathbf{x}_t} \log q_t(\mathbf{x}_t)\right\|_2^2\right]\right\}\\
= & \underset{\boldsymbol{\phi}}{\arg \min } \ \mathbb{E}_t\left\{\lambda(t) \mathbb{E}_{p(\mathbf{x}_0)} \mathbb{E}_{p(\mathbf{x}_t|\mathbf{x}_0)}\left[\left\|\mathbf{s}_{\boldsymbol{\phi}}(\mathbf{x}_t, t)-
\nabla_{\mathbf{x}_t} \log p(\mathbf{x}_t | \mathbf{x}_0)\right\|_2^2 \frac{q(\mathbf{x}_0)}{p(\mathbf{x}_0)}\right]\right\}.
\end{aligned}    
\end{equation}
\end{lemma}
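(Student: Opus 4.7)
The plan is to reduce the target-domain explicit score matching objective on the left-hand side of \eqref{eq:appendix1} to the importance-weighted denoising score matching objective on the right-hand side in two structural steps: first convert explicit score matching to denoising score matching in the target domain, and then perform a change of measure from $q(\mathbf{x}_0)$ to $p(\mathbf{x}_0)$.

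First I would invoke the classical Vincent (2011) identity for denoising score matching, applied pointwise in $t$ to the target domain. That identity states that
\begin{equation*}
\mathbb{E}_{q_t(\mathbf{x}_t)}\!\left[\bigl\|\mathbf{s}_{\boldsymbol{\phi}}(\mathbf{x}_t,t)-\nabla_{\mathbf{x}_t}\log q_t(\mathbf{x}_t)\bigr\|_2^2\right]
= \mathbb{E}_{q(\mathbf{x}_0)}\mathbb{E}_{q(\mathbf{x}_t\mid\mathbf{x}_0)}\!\left[\bigl\|\mathbf{s}_{\boldsymbol{\phi}}(\mathbf{x}_t,t)-\nabla_{\mathbf{x}_t}\log q(\mathbf{x}_t\mid\mathbf{x}_0)\bigr\|_2^2\right] + C(t),
\end{equation*}
where $C(t)$ is independent of $\boldsymbol{\phi}$ and hence is irrelevant to the $\arg\min$. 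The derivation is the standard expansion of the squared norm, the use of $q_t(\mathbf{x}_t)=\int q(\mathbf{x}_0)q(\mathbf{x}_t\mid\mathbf{x}_0)\,d\mathbf{x}_0$, and integration by parts (or the Stein-type identity) to rewrite the cross term, so I would only sketch this and cite Vincent.

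Second I would invoke the hypothesis of Theorem~\ref{thm:IS_guidance} that the forward process is shared across domains, i.e.\ $q(\mathbf{x}_t\mid\mathbf{x}_0)=p(\mathbf{x}_t\mid\mathbf{x}_0)$ for every $t\in[0,T]$. This lets me replace both the conditional distribution inside the inner expectation and the score $\nabla_{\mathbf{x}_t}\log q(\mathbf{x}_t\mid\mathbf{x}_0)$ by their source-domain counterparts, yielding
\begin{equation*}
\mathbb{E}_{q(\mathbf{x}_0)}\mathbb{E}_{p(\mathbf{x}_t\mid\mathbf{x}_0)}\!\left[\bigl\|\mathbf{s}_{\boldsymbol{\phi}}(\mathbf{x}_t,t)-\nabla_{\mathbf{x}_t}\log p(\mathbf{x}_t\mid\mathbf{x}_0)\bigr\|_2^2\right].
\end{equation*}
Then a standard importance-weighting change of measure, $\mathbb{E}_{q(\mathbf{x}_0)}[\,\cdot\,]=\mathbb{E}_{p(\mathbf{x}_0)}[\,(q(\mathbf{x}_0)/p(\mathbf{x}_0))\,\cdot\,]$, turns the outer expectation into one with respect to $p(\mathbf{x}_0)$ with the weight $q(\mathbf{x}_0)/p(\mathbf{x}_0)$ factored inside. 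Multiplying by $\lambda(t)$, taking $\mathbb{E}_t$, and dropping the $\boldsymbol{\phi}$-independent term $\mathbb{E}_t[\lambda(t)C(t)]$ then shows equality of the two $\arg\min$ problems.

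The only real subtlety, and the step I would be most careful about, is the change of measure: it implicitly requires absolute continuity $q\ll p$ on the initial distribution so that $q(\mathbf{x}_0)/p(\mathbf{x}_0)$ is well-defined $p$-almost everywhere, which is the standard assumption for importance-weighted estimators and is needed for the whole TGDP framework (the density ratio is what the classifier of Section~\ref{sec:guidancee_net} learns). I would state this assumption explicitly before performing the change of measure. Everything else — the denoising score matching identity and the swap $q(\mathbf{x}_t\mid\mathbf{x}_0)\to p(\mathbf{x}_t\mid\mathbf{x}_0)$ — is routine, so after these three observations the equality of the two $\arg\min$ problems follows immediately.
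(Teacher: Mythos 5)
Your proposal is correct and follows essentially the same route as the paper's proof: first Vincent's identity to pass from explicit to denoising score matching on the target domain, then the shared forward kernel $q(\mathbf{x}_t\mid\mathbf{x}_0)=p(\mathbf{x}_t\mid\mathbf{x}_0)$ together with the density-ratio change of measure $\mathbb{E}_{q(\mathbf{x}_0)}[\,\cdot\,]=\mathbb{E}_{p(\mathbf{x}_0)}[(q(\mathbf{x}_0)/p(\mathbf{x}_0))\,\cdot\,]$. Your direct change of measure on the whole integrand is in fact tidier than the paper's version, which expands the square and verifies the quadratic and cross terms separately, and your explicit remark on the absolute continuity $q\ll p$ needed for the ratio to be well-defined is a point the paper leaves implicit.
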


\begin{proof}[Proof of Lemma \ref{DSM-equ}]
We first connect Score Matching objective in the target domain to Denoising Score Matching objective in target distribution, which is proven by \cite{Vincent2011ACB}, i.e.,
\[
\begin{aligned}
\boldsymbol{\phi}^*= &\underset{\boldsymbol{\phi}}{\arg \min } \ \mathbb{E}_t\left\{\lambda(t) \mathbb{E}_{q_t(\mathbf{x}_t)} \left[\left\|\mathbf{s}_{\boldsymbol{\phi}}(\mathbf{x}_t, t)-
\nabla_{\mathbf{x}_t} \log q_t(\mathbf{x}_t)\right\|_2^2\right]\right\}\\= &\underset{\boldsymbol{\phi}}{\arg \min } \ \mathbb{E}_t\left\{\lambda(t) \mathbb{E}_{q(\mathbf{x}_0)} \mathbb{E}_{q(\mathbf{x}_t|\mathbf{x}_0)}\left[\left\|\mathbf{s}_{\boldsymbol{\phi}}(\mathbf{x}_t, t)-
\nabla_{\mathbf{x}_t} \log q(\mathbf{x}_t | \mathbf{x}_0)\right\|_2^2\right]\right\}.
\end{aligned}
\]
Then, we split the mean squared error of Denoising Score Matching objective on target distribution into three terms as follows:
\begin{align}\label{eq:DSM_target1}
&\mathbb{E}_{q(\mathbf{x}_0)} \mathbb{E}_{q(\mathbf{x}_t|\mathbf{x}_0)}\left[\left\|\mathbf{s}_{\boldsymbol{\phi}}(\mathbf{x}_t, t)-
\nabla_{\mathbf{x}_t} \log q(\mathbf{x}_t | \mathbf{x}_0)\right\|_2^2\right]\notag\\
=& \mathbb{E}_{q(\mathbf{x}_0, \mathbf{x}_t)}\left[\left\|\mathbf{s}_{\boldsymbol{\phi}}(\mathbf{x}_t, t)\right\|_2^2\right] - 2\mathbb{E}_{q(\mathbf{x}_0,\mathbf{x}_t)}\left[\langle \mathbf{s}_{\boldsymbol{\phi}}(\mathbf{x}_t, t), \nabla_{\mathbf{x}_t} \log q(\mathbf{x}_t | \mathbf{x}_0) \rangle\right]  + C_1,
\end{align}
where $C_1=\mathbb{E}_{q(\mathbf{x}_0,\mathbf{x}_t)}\left[\left\| \nabla_{\mathbf{x}_t} \log q(\mathbf{x}_t | \mathbf{x}_0)\right\|_2^2\right]$ is a constant independent with $\boldsymbol{\phi}$. We can similarly split the objective function in the right-hand side (RHS) of Eq \eqref{eq:appendix1} as follows:
\begin{align}\label{eq:ISDSM_source1}
& \mathbb{E}_{p(\mathbf{x}_0)} \mathbb{E}_{p(\mathbf{x}_t|\mathbf{x}_0)}\left[\left\|\mathbf{s}_{\boldsymbol{\phi}}(\mathbf{x}_t, t)-
\nabla_{\mathbf{x}_t} \log p(\mathbf{x}_t | \mathbf{x}_0)\right\|_2^2 \frac{q(\mathbf{x}_0)}{p(\mathbf{x}_0)}\right]\notag\\
=& \mathbb{E}_{p(\mathbf{x}_0, \mathbf{x}_t)}\left[\left\|\mathbf{s}_{\boldsymbol{\phi}}(\mathbf{x}_t, t)\right\|_2^2\frac{q(\mathbf{x}_0)}{p(\mathbf{x}_0)}\right] - 2\mathbb{E}_{p(\mathbf{x}_0,\mathbf{x}_t)}\left[\langle \mathbf{s}_{\boldsymbol{\phi}}(\mathbf{x}_t, t), \nabla_{\mathbf{x}_t} \log p(\mathbf{x}_t | \mathbf{x}_0) \rangle\frac{q(\mathbf{x}_0)}{p(\mathbf{x}_0)}\right] + C_2,
\end{align}
where $C_2$ is a constant independent with $\boldsymbol{\phi}$.
It is easy to show that the first term in Eq \eqref{eq:DSM_target1} is equal to the first term in Eq \eqref{eq:ISDSM_source1}, i.e., 
\[
\begin{aligned}
\mathbb{E}_{p(\mathbf{x}_0, \mathbf{x}_t)}\left[\left\|\mathbf{s}_{\boldsymbol{\phi}}(\mathbf{x}_t, t)\right\|_2^2\frac{q(\mathbf{x}_0)}{p(\mathbf{x}_0)}\right]= &\int_{\mathbf{x}_0} \int_{\mathbf{x}_t} p(\mathbf{x}_0) p(\mathbf{x}_t| \mathbf{x}_0) \left\|\mathbf{s}_{\boldsymbol{\phi}}(\mathbf{x}_t, t)\right\|_2^2\frac{q(\mathbf{x}_0)}{p(\mathbf{x}_0)}d \mathbf{x}_0 d \mathbf{x}_t\\
\overset{(i)}{=} &\int_{\mathbf{x}_0} \int_{\mathbf{x}_t} p(\mathbf{x}_0) q(\mathbf{x}_t| \mathbf{x}_0) \left\|\mathbf{s}_{\boldsymbol{\phi}}(\mathbf{x}_t, t)\right\|_2^2\frac{q(\mathbf{x}_0)}{p(\mathbf{x}_0)}d \mathbf{x}_0 d \mathbf{x}_t\\
= &\int_{\mathbf{x}_0} \int_{\mathbf{x}_t}  q(\mathbf{x}_0,\mathbf{x}_t) \left\|\mathbf{s}_{\boldsymbol{\phi}}(\mathbf{x}_t, t)\right\|_2^2d \mathbf{x}_0 d \mathbf{x}_t\\
= &\mathbb{E}_{q(\mathbf{x}_0, \mathbf{x}_t)}\left[\left\|\mathbf{s}_{\boldsymbol{\phi}}(\mathbf{x}_t, t)\right\|_2^2\right],
\end{aligned}
\]
where the equality $(i)$ is due to $q(\mathbf{x}_t | \mathbf{x}_0)=p(\mathbf{x}_t |\mathbf{x}_0)$.

Next, we prove the second terms in Eq \eqref{eq:DSM_target1} and Eq \eqref{eq:ISDSM_source1} are also equivalent:
\[
\begin{aligned}
&\mathbb{E}_{p(\mathbf{x}_0,\mathbf{x}_t)}\left[\langle \mathbf{s}_{\boldsymbol{\phi}}(\mathbf{x}_t, t), \nabla_{\mathbf{x}_t} \log p(\mathbf{x}_t | \mathbf{x}_0) \rangle\frac{q(\mathbf{x}_0)}{p(\mathbf{x}_0)}\right]\\
=& \int_{\mathbf{x}_0} \int_{\mathbf{x}_t} p(\mathbf{x}_0, \mathbf{x}_t) \langle \mathbf{s}_{\boldsymbol{\phi}}(\mathbf{x}_t, t),  \frac{\nabla_{\mathbf{x}_t} p(\mathbf{x}_t | \mathbf{x}_0)}{p(\mathbf{x}_t | \mathbf{x}_0)}  \rangle\frac{q(\mathbf{x}_0)}{p(\mathbf{x}_0)}d \mathbf{x}_0 d \mathbf{x}_t\\
\overset{(i)}{=}& \int_{\mathbf{x}_0} \int_{\mathbf{x}_t} p(\mathbf{x}_0) p(\mathbf{x}_t| \mathbf{x}_0)  \langle \mathbf{s}_{\boldsymbol{\phi}}(\mathbf{x}_t, t),  \frac{\nabla_{\mathbf{x}_t} q(\mathbf{x}_t | \mathbf{x}_0)}{p(\mathbf{x}_t | \mathbf{x}_0)}  \rangle\frac{q(\mathbf{x}_0)}{p(\mathbf{x}_0)}d \mathbf{x}_0 d \mathbf{x}_t\\
=& \int_{\mathbf{x}_0} \int_{\mathbf{x}_t}  \langle \mathbf{s}_{\boldsymbol{\phi}}(\mathbf{x}_t, t),  \nabla_{\mathbf{x}_t} q(\mathbf{x}_t | \mathbf{x}_0)  \rangle q(\mathbf{x}_0)d \mathbf{x}_0 d \mathbf{x}_t\\
=& \int_{\mathbf{x}_0} \int_{\mathbf{x}_t}  \langle \mathbf{s}_{\boldsymbol{\phi}}(\mathbf{x}_t, t),  \nabla_{\mathbf{x}_t} \log q(\mathbf{x}_t | \mathbf{x}_0)  \rangle q(\mathbf{x}_t | \mathbf{x}_0) q(\mathbf{x}_0)d \mathbf{x}_0 d \mathbf{x}_t\\
=&\mathbb{E}_{q(\mathbf{x}_0,\mathbf{x}_t)}\left[\langle \mathbf{s}_{\boldsymbol{\phi}}(\mathbf{x}_t, t), \nabla_{\mathbf{x}_t} \log q(\mathbf{x}_t | \mathbf{x}_0) \rangle\right],
\end{aligned}
\]
where the equality $(i)$ is again due to $q(\mathbf{x}_t | \mathbf{x}_0)=p(\mathbf{x}_t |\mathbf{x}_0)$. Thereby we prove Eq \ref{eq:appendix1}.
\end{proof}

According to Lemma \ref{DSM-equ}, 
\[
\boldsymbol{\phi}^*= \underset{\boldsymbol{\phi}}{\arg \min } ~\mathbb{E}_t\left\{\lambda(t) \mathbb{E}_{p(\mathbf{x}_0)} \mathbb{E}_{p(\mathbf{x}_t|\mathbf{x}_0)}\left[\left\|\mathbf{s}_{\boldsymbol{\phi}}(\mathbf{x}_t, t)-
\nabla_{\mathbf{x}_t} \log p(\mathbf{x}_t | \mathbf{x}_0)\right\|_2^2 \frac{q(\mathbf{x}_0)}{p(\mathbf{x}_0)}\right]\right\}.
\]
Based on this, we may use Importance Weighted Denoising Score Matching on the source domain to get the analytic form of $\mathbf{s}_{\boldsymbol{\phi}^*}$ as follows: 
\[
\mathbf{s}_{\boldsymbol{\phi}^*}(\mathbf{x}_t, t) = \frac{\mathbb{E}_{p(\mathbf{x}_0|\mathbf{x}_t)}\left[\nabla_{\mathbf{x}_t} \log p(\mathbf{x}_t | \mathbf{x}_0)\frac{q(\mathbf{x}_0)}{p(\mathbf{x}_0)}   \right]}{\mathbb{E}_{p(\mathbf{x}_0|\mathbf{x}_t)}\left[\frac{q(\mathbf{x}_0)}{p(\mathbf{x}_0)}   \right]}.
\]
The RHS of Eq \eqref{eq:IS_guid} can be rewritten as follows:
\[
\begin{aligned}
\text{RHS} =& \nabla_{\mathbf{x}_t} \log p_t(\mathbf{x}_t)+ \nabla_{\mathbf{x}_t} \log \mathbb{E}_{p(\mathbf{x}_0|\mathbf{x}_t)}\left[\frac{q(\mathbf{x}_0)}{p(\mathbf{x}_0)}\right]= \nabla_{\mathbf{x}_t} \log p_t(\mathbf{x}_t)+ \frac{\nabla_{\mathbf{x}_t}  \mathbb{E}_{p(\mathbf{x}_0|\mathbf{x}_t)}\left[\frac{q(\mathbf{x}_0)}{p(\mathbf{x}_0)}\right]}{\mathbb{E}_{p(\mathbf{x}_0|\mathbf{x}_t)}\left[\frac{q(\mathbf{x}_0)}{p(\mathbf{x}_0)}\right]}\\
=& \nabla_{\mathbf{x}_t} \log p_t(\mathbf{x}_t)+ \frac{  \mathbb{E}_{p(\mathbf{x}_0|\mathbf{x}_t)}\left[\frac{q(\mathbf{x}_0)}{p(\mathbf{x}_0)}\nabla_{\mathbf{x}_t} \log p(\mathbf{x}_0|\mathbf{x}_t)\right]}{\mathbb{E}_{p(\mathbf{x}_0|\mathbf{x}_t)}\left[\frac{q(\mathbf{x}_0)}{p(\mathbf{x}_0)}\right]}.
\end{aligned}
\]
Since 
\[
\begin{aligned}
\nabla_{\mathbf{x}_t} \log p(\mathbf{x}_0|\mathbf{x}_t) &= \nabla_{\mathbf{x}_t} \log p(\mathbf{x}_t|\mathbf{x}_0)+ \nabla_{\mathbf{x}_t} \log p(\mathbf{x}_0) - \nabla_{\mathbf{x}_t} \log p_t(\mathbf{x}_t)\\
&= \nabla_{\mathbf{x}_t} \log p(\mathbf{x}_t|\mathbf{x}_0) - \nabla_{\mathbf{x}_t} \log p_t(\mathbf{x}_t),
\end{aligned}
\]
we can further rewrite the RHS of Eq \eqref{eq:IS_guid} as follows:
\[
\begin{aligned}
\text{RHS} =& \nabla_{\mathbf{x}_t} \log p_t(\mathbf{x}_t)+ \frac{  \mathbb{E}_{p(\mathbf{x}_0|\mathbf{x}_t)}\left[\frac{q(\mathbf{x}_0)}{p(\mathbf{x}_0)}\nabla_{\mathbf{x}_t} \log p(\mathbf{x}_t|\mathbf{x}_0)\right]}{\mathbb{E}_{p(\mathbf{x}_0|\mathbf{x}_t)}\left[\frac{q(\mathbf{x}_0)}{p(\mathbf{x}_0)}\right]} -\nabla_{\mathbf{x}_t} \log p_t(\mathbf{x}_t)\\
=&\frac{\mathbb{E}_{p(\mathbf{x}_0|\mathbf{x}_t)}\left[\nabla_{\mathbf{x}_t} \log p(\mathbf{x}_t | \mathbf{x}_0)\frac{q(\mathbf{x}_0)}{p(\mathbf{x}_0)}   \right]}{\mathbb{E}_{p(\mathbf{x}_0|\mathbf{x}_t)}\left[\frac{q(\mathbf{x}_0)}{p(\mathbf{x}_0)}   \right]}\\
=& \mathbf{s}_{\boldsymbol{\phi}^*}(\mathbf{x}_t, t).
\end{aligned}
\]
Thereby we complete the proof.
\end{proof}

\subsection{Proof of Lemma \ref{thm:exact_guidance}}
\begin{proof}
The proof is straightforward and we include it below for completeness. Note that the objective function can be rewritten as 
\[
\begin{aligned}
    \mathcal{L}_{\text{guidance}}(\boldsymbol{\psi}) := & \mathbb{E}_{p(\mathbf{x}_0, \mathbf{x}_t)}\left[\left\|h_{\boldsymbol{\psi}}\left(\mathbf{x}_t, t\right)-\frac{q(\mathbf{x}_0)}{p(\mathbf{x}_0)}\right\|_2^2\right]\\
    = &   \int_{\mathbf{x}_t} \left\{\int_{\mathbf{x}_0}p(\mathbf{x}_0|\mathbf{x}_t) \left\|h_{\boldsymbol{\psi}}\left(\mathbf{x}_t, t\right)-\frac{q(\mathbf{x}_0)}{p(\mathbf{x}_0)}\right\|_2^2 d\mathbf{x}_0 \right\} p(\mathbf{x}_t)d\mathbf{x}_t \\
    = & \int_{\mathbf{x}_t} \left\{ \left\|h_{\boldsymbol{\psi}}(\mathbf{x}_t, t)\right\|_2^2  -  2 \langle h_{\boldsymbol{\psi}}(\mathbf{x}_t, t),  \int_{\mathbf{x}_0}p(\mathbf{x}_0|\mathbf{x}_t)  \frac{q(\mathbf{x}_0)}{p(\mathbf{x}_0)}    d\mathbf{x}_0 \rangle \right\} p(\mathbf{x}_t)d\mathbf{x}_t + C \\
    = &  \int_{\mathbf{x}_t}  \left\|h_{\boldsymbol{\psi}}(\mathbf{x}_t, t) - \mathbb{E}_{p(\mathbf{x}_0 |\mathbf{x}_t)}\left[\frac{q(\mathbf{x}_0)}{p(\mathbf{x}_0)}\right] 
    \right\|_2^2 p(\mathbf{x}_t)d\mathbf{x}_t,
\end{aligned}
\]
where $C$ is a constant independent of $\boldsymbol{\psi}$. Thus we have the minimizer $\boldsymbol{\psi}^* = \underset{\boldsymbol{\psi}}{\arg \min } \ \mathcal{L}_{\text{guidance}}(\boldsymbol{\psi})$ satisfies $h_{\boldsymbol{\psi}^*}\left(\mathbf{x}_t, t\right)=\mathbb{E}_{p(\mathbf{x}_0|\mathbf{x}_t)}\left[{q(\mathbf{x}_0)}/{p(\mathbf{x}_0)}\right]$.
\end{proof}

\subsection{Conditional version for Lemma \ref{thm:exact_guidance}}
\label{ap:conditional_exact_guidance}

\begin{lemma}\label{thm:exact_guidance_condi}
For a neural network $h_{\boldsymbol{\psi}}\left(\mathbf{x}_t, y, t\right)$ parameterized by $\boldsymbol{\psi}$, define the objective  
\begin{equation} \label{eq:guidance}
\mathcal{L}_{\text{guidance}}(\boldsymbol{\psi}) :=\mathbb{E}_{p(\mathbf{x}_0, \mathbf{x}_t, y)}\left[\left\|h_{\boldsymbol{\psi}}\left(\mathbf{x}_t, y, t\right)-\frac{q(\mathbf{x}_0, y)}{p(\mathbf{x}_0, y)}\right\|_2^2\right],
\end{equation}
then its minimizer $\boldsymbol{\psi}^* = \underset{\boldsymbol{\psi}}{\arg \min } \ \mathcal{L}_{\text{guidance}}(\boldsymbol{\psi})$ satisfies:
\[
h_{\boldsymbol{\psi}^*}\left(\mathbf{x}_t, y, t\right)=\mathbb{E}_{p(\mathbf{x}_0 |\mathbf{x}_t, y)}\left[{q(\mathbf{x}_0, y)}/{p(\mathbf{x}_0, y)}\right].
\]
\end{lemma}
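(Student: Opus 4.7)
The plan is to mimic the proof of Lemma 3.4 (the unconditional case) exactly, treating the pair $(\mathbf{x}_t, y)$ as the conditioning variable instead of just $\mathbf{x}_t$. First I would factor the joint density under the expectation as $p(\mathbf{x}_0, \mathbf{x}_t, y) = p(\mathbf{x}_0 \mid \mathbf{x}_t, y)\, p(\mathbf{x}_t, y)$, so that $\mathcal{L}_{\text{guidance}}(\boldsymbol{\psi})$ becomes an outer integral over $(\mathbf{x}_t, y)$ weighted by $p(\mathbf{x}_t, y)$ with an inner integral over $\mathbf{x}_0$ weighted by $p(\mathbf{x}_0 \mid \mathbf{x}_t, y)$.

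Next I would use that $h_{\boldsymbol{\psi}}(\mathbf{x}_t, y, t)$ does not depend on $\mathbf{x}_0$, so the inner integral can be expanded by squaring out $\|h_{\boldsymbol{\psi}}(\mathbf{x}_t, y, t) - q(\mathbf{x}_0, y)/p(\mathbf{x}_0, y)\|_2^2$. The quadratic term $\|h_{\boldsymbol{\psi}}\|_2^2$ passes through the inner integral unchanged, the cross term collapses to $-2\bigl\langle h_{\boldsymbol{\psi}}(\mathbf{x}_t, y, t),\, \mathbb{E}_{p(\mathbf{x}_0 \mid \mathbf{x}_t, y)}[q(\mathbf{x}_0, y)/p(\mathbf{x}_0, y)] \bigr\rangle$, and the remaining $\|q(\mathbf{x}_0, y)/p(\mathbf{x}_0, y)\|_2^2$ contribution is a constant $C$ independent of $\boldsymbol{\psi}$.

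Completing the square at each fixed $(\mathbf{x}_t, y)$ then rewrites the loss, up to a $\boldsymbol{\psi}$-independent constant, as
\[
\int \Bigl\| h_{\boldsymbol{\psi}}(\mathbf{x}_t, y, t) - \mathbb{E}_{p(\mathbf{x}_0 \mid \mathbf{x}_t, y)}\!\bigl[q(\mathbf{x}_0, y)/p(\mathbf{x}_0, y)\bigr] \Bigr\|_2^2 \, p(\mathbf{x}_t, y)\, d\mathbf{x}_t\, dy + C.
\]
Since the integrand is nonnegative and the weight $p(\mathbf{x}_t, y)$ is nonnegative, the global minimum is attained precisely when the bracketed expression vanishes $p(\mathbf{x}_t, y)$-almost everywhere, yielding the claimed identity $h_{\boldsymbol{\psi}^*}(\mathbf{x}_t, y, t) = \mathbb{E}_{p(\mathbf{x}_0 \mid \mathbf{x}_t, y)}[q(\mathbf{x}_0, y)/p(\mathbf{x}_0, y)]$.

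There is no real obstacle here — the argument is the standard ``conditional mean minimizes MSE'' fact, structurally identical to the proof of Lemma 3.4 with the label $y$ simply carried along as an additional conditioning coordinate. The only implicit assumption to flag, inherited from the paper's treatment throughout, is that the parameterization class $\{h_{\boldsymbol{\psi}}\}$ is rich enough to realize the pointwise conditional expectation so that the per-$(\mathbf{x}_t, y)$ optimum is attainable by some $\boldsymbol{\psi}^*$; the identification is then almost everywhere with respect to $p(\mathbf{x}_t, y)$.
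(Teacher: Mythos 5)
Your proposal is correct and follows essentially the same route as the paper's own proof in Appendix C.3: factor $p(\mathbf{x}_0,\mathbf{x}_t,y)=p(\mathbf{x}_0\mid\mathbf{x}_t,y)\,p(\mathbf{x}_t,y)$, expand the square, absorb the $\boldsymbol{\psi}$-independent term into a constant, and complete the square to identify the minimizer as the conditional mean. Your explicit remark about realizability of the conditional expectation within the parameterized class and the almost-everywhere nature of the identification is a reasonable clarification that the paper leaves implicit.
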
 

\paragraph{Proof of Lemma \ref{thm:exact_guidance_condi}}
\begin{proof}

The proof is straightforward and we include it below for completeness. Note that the objective function can be rewritten as 
\[
\begin{aligned}
    &\mathcal{L}_{\text{guidance}}(\boldsymbol{\psi})    \\
    := & \mathbb{E}_{p(\mathbf{x}_0, \mathbf{x}_t, y)}\left[\left\|h_{\boldsymbol{\psi}}\left(\mathbf{x}_t, y, t\right)-\frac{q(\mathbf{x}_0, y)}{p(\mathbf{x}_0, y)}\right\|_2^2\right]\\
    = &   \int_{\mathbf{x}_t} \int_{y} \left\{\int_{\mathbf{x}_0}p(\mathbf{x}_0|\mathbf{x}_t,y) \left\|h_{\boldsymbol{\psi}}\left(\mathbf{x}_t, y, t\right)-\frac{q(\mathbf{x}_0, y)}{p(\mathbf{x}_0, y)}\right\|_2^2 d\mathbf{x}_0 \right\} p(\mathbf{x}_t|y) p(y) dy d\mathbf{x}_t \\
    = & \int_{\mathbf{x}_t} \int_{y} \left\{ \left\|h_{\boldsymbol{\psi}}(\mathbf{x}_t, y, t)\right\|_2^2  -  2 \langle h_{\boldsymbol{\psi}}(\mathbf{x}_t, y, t),  \int_{\mathbf{x}_0}p(\mathbf{x}_0|\mathbf{x}_t, y)  \frac{q(\mathbf{x}_0,y)}{p(\mathbf{x}_0,y)}    d\mathbf{x}_0 \rangle \right\} p(\mathbf{x}_t|y) p(y) dyd\mathbf{x}_t + C \\
    = &  \int_{\mathbf{x}_t} \int_{y} \left\|h_{\boldsymbol{\psi}}(\mathbf{x}_t, y, t) - \mathbb{E}_{p(\mathbf{x}_0 |\mathbf{x}_t, y)}\left[\frac{q(\mathbf{x}_0, y)}{p(\mathbf{x}_0, y)}\right] 
    \right\|_2^2 p(\mathbf{x}_t|y) p(y) dyd\mathbf{x}_t,
\end{aligned}
\]
where $C$ is a constant independent of $\boldsymbol{\psi}$. Thus we have the minimizer $\boldsymbol{\psi}^* = \underset{\boldsymbol{\psi}}{\arg \min } \ \mathcal{L}_{\text{guidance}}(\boldsymbol{\psi})$ satisfies $h_{\boldsymbol{\psi}^*}\left(\mathbf{x}_t, y, t\right)=\mathbb{E}_{p(\mathbf{x}_0|\mathbf{x}_t, y)}\left[{q(\mathbf{x}_0, y)}/{p(\mathbf{x}_0, y)}\right]$.
\end{proof}

\subsection{Proof of Theorem \ref{thm:IS_guidance_conditional}}\label{ap:proof_thm_ISguidance_conditional}

\begin{proof}
The proof is similar to the proof of Theorem \ref{thm:IS_guidance}. To prove Eq \ref{eq:guide_conditional}, we first build the connection between the Conditional Score Matching on the target domain and Importance Weighted Conditional Denoising Score Matching on the source domain in the following Lemma:

\begin{lemma}
\label{DSM-equ-guidance}
Conditional Score Matching on the target domain is equivalent to Importance Weighted Denoising Score Matching on the source domain, i.e.,  
\[
\begin{aligned}
\boldsymbol{\phi}^*= &\underset{\boldsymbol{\phi}}{\arg \min } \ \mathbb{E}_t\left\{\lambda(t) \mathbb{E}_{q_t(\mathbf{x}_t, y)} \left[\left\|\mathbf{s}_{\boldsymbol{\phi}}(\mathbf{x}_t, y, t)-
\nabla_{\mathbf{x}_t} \log q_t(\mathbf{x}_t | y)\right\|_2^2\right]\right\}\\
= & \underset{\boldsymbol{\phi}}{\arg \min } \ \mathbb{E}_t\left\{\lambda(t) \mathbb{E}_{p(\mathbf{x}_0, y)} \mathbb{E}_{p(\mathbf{x}_t|\mathbf{x}_0)}\left[\left\|\mathbf{s}_{\boldsymbol{\phi}}(\mathbf{x}_t, y,  t)-
\nabla_{\mathbf{x}_t} \log p(\mathbf{x}_t | \mathbf{x}_0)\right\|_2^2 \frac{q(\mathbf{x}_0, y)}{p(\mathbf{x}_0, y)}\right]\right\}.
\end{aligned}
\]
\end{lemma}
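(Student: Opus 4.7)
The plan is to mirror the proof of Lemma \ref{DSM-equ}, lifting each step from marginal score matching to conditional score matching. The overall strategy has three moves: (i) use a Vincent-style identity to convert Conditional Score Matching on the target domain into Conditional Denoising Score Matching on the target; (ii) invoke the conditional independence assumption $p(\mathbf{x}_t|\mathbf{x}_0,y)=p(\mathbf{x}_t|\mathbf{x}_0)$ together with $q(\mathbf{x}_t|\mathbf{x}_0)=p(\mathbf{x}_t|\mathbf{x}_0)$ to rewrite the inner expectation over the shared forward transition, and to replace $\nabla_{\mathbf{x}_t}\log q(\mathbf{x}_t|\mathbf{x}_0,y)$ by $\nabla_{\mathbf{x}_t}\log p(\mathbf{x}_t|\mathbf{x}_0)$; (iii) perform the change of measure on the outer expectation from $q(\mathbf{x}_0,y)$ to $p(\mathbf{x}_0,y)$, producing the importance weight $q(\mathbf{x}_0,y)/p(\mathbf{x}_0,y)$.

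More concretely, I would first expand the squared norm inside the target-domain Conditional Score Matching objective and show that, up to a constant in $\boldsymbol{\phi}$, it equals the Conditional Denoising Score Matching objective
\[
\mathbb{E}_t\!\left\{\lambda(t)\,\mathbb{E}_{q(\mathbf{x}_0,y)}\mathbb{E}_{q(\mathbf{x}_t|\mathbf{x}_0,y)}\!\left[\big\|\mathbf{s}_{\boldsymbol{\phi}}(\mathbf{x}_t,y,t)-\nabla_{\mathbf{x}_t}\log q(\mathbf{x}_t|\mathbf{x}_0,y)\big\|_2^2\right]\right\}.
\]
This step is essentially Vincent's argument applied conditionally on $y$: expand the cross term and swap the order of gradient and integration, using $\nabla_{\mathbf{x}_t} q_t(\mathbf{x}_t|y)=\int \nabla_{\mathbf{x}_t}q(\mathbf{x}_t|\mathbf{x}_0,y)\,q(\mathbf{x}_0|y)\,d\mathbf{x}_0$. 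The discrepancy between the two objectives is independent of $\boldsymbol{\phi}$.

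Next, the assumption $p(\mathbf{x}_t|\mathbf{x}_0,y)=p(\mathbf{x}_t|\mathbf{x}_0)$ combined with $q(\mathbf{x}_t|\mathbf{x}_0)=p(\mathbf{x}_t|\mathbf{x}_0)$ gives $q(\mathbf{x}_t|\mathbf{x}_0,y)=p(\mathbf{x}_t|\mathbf{x}_0)$ (since $q$ inherits the same conditional independence from the shared forward SDE), so both the sampling distribution $q(\mathbf{x}_t|\mathbf{x}_0,y)$ and the score target $\nabla_{\mathbf{x}_t}\log q(\mathbf{x}_t|\mathbf{x}_0,y)$ can be replaced by $p(\mathbf{x}_t|\mathbf{x}_0)$ and $\nabla_{\mathbf{x}_t}\log p(\mathbf{x}_t|\mathbf{x}_0)$, respectively. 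Finally, the importance-sampling identity
\[
\mathbb{E}_{q(\mathbf{x}_0,y)}[F(\mathbf{x}_0,y)] = \mathbb{E}_{p(\mathbf{x}_0,y)}\!\left[F(\mathbf{x}_0,y)\,\tfrac{q(\mathbf{x}_0,y)}{p(\mathbf{x}_0,y)}\right]
\]
transforms the outer expectation to one over $p(\mathbf{x}_0,y)$, producing exactly the right-hand side of the claimed equivalence.

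The main obstacle I anticipate is bookkeeping the conditional independence carefully: one must verify that the assumed conditional independence under $p$ and the shared forward transition together imply conditional independence under $q$ as well, so that the score target and the sampling law both collapse to $p(\mathbf{x}_t|\mathbf{x}_0)$-based quantities simultaneously. Once this is established, the remainder is a direct conditional analogue of the unconditional argument, with $(\mathbf{x}_0,y)$ playing the role of $\mathbf{x}_0$ in Lemma \ref{DSM-equ}, and the joint density ratio $q(\mathbf{x}_0,y)/p(\mathbf{x}_0,y)$ replacing the marginal ratio $q(\mathbf{x}_0)/p(\mathbf{x}_0)$.
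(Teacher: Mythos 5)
Your proposal is correct and follows essentially the same route as the paper: reduce conditional score matching on the target to conditional denoising score matching via the Vincent/Batzolis identity, use the conditional-independence assumption together with the shared forward transition to identify $q(\mathbf{x}_t|\mathbf{x}_0,y)$ with $p(\mathbf{x}_t|\mathbf{x}_0)$, and then change measure from $q(\mathbf{x}_0,y)$ to $p(\mathbf{x}_0,y)$ to produce the joint density ratio. The only cosmetic difference is that you apply the importance-sampling identity to the whole squared-error expression at once, whereas the paper expands the square and matches the quadratic and cross terms separately; both amount to the same change of measure.
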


\begin{proof}[Proof of Lemma \ref{DSM-equ-guidance}]
We first connect the Conditional Score Matching objective in the target domain to the Conditional Denoising Score Matching objective in target distribution, which is proven by \cite[Theorem 1]{Batzolis2021ConditionalIG}, i.e.,
\[
\begin{aligned}
\boldsymbol{\phi}^*= &\underset{\boldsymbol{\phi}}{\arg \min } \ \mathbb{E}_t\left\{\lambda(t) \mathbb{E}_{q_t(\mathbf{x}_t, y)} \left[\left\|\mathbf{s}_{\boldsymbol{\phi}}(\mathbf{x}_t, y, t)-
\nabla_{\mathbf{x}_t} \log q_t(\mathbf{x}_t|y)\right\|_2^2\right]\right\}\\= &\underset{\boldsymbol{\phi}}{\arg \min } \ \mathbb{E}_t\left\{\lambda(t) \mathbb{E}_{q(\mathbf{x}_0,y)} \mathbb{E}_{q(\mathbf{x}_t|\mathbf{x}_0)}\left[\left\|\mathbf{s}_{\boldsymbol{\phi}}(\mathbf{x}_t, y, t)-
\nabla_{\mathbf{x}_t} \log q(\mathbf{x}_t | \mathbf{x}_0)\right\|_2^2\right]\right\}.
\end{aligned}
\]

Then we split the mean squared error of the Conditional Denoising Score Matching objective on target distribution into three terms as follows:
\begin{align}\label{eq:DSM_target}
&\mathbb{E}_{q(\mathbf{x}_0, y)} \mathbb{E}_{q(\mathbf{x}_t|\mathbf{x}_0)}\left[\left\|\mathbf{s}_{\boldsymbol{\phi}}(\mathbf{x}_t, y, t)-
\nabla_{\mathbf{x}_t} \log q(\mathbf{x}_t | \mathbf{x}_0)\right\|_2^2\right]\notag\\
=& \mathbb{E}_{q(\mathbf{x}_0, \mathbf{x}_t, y)}\left[\left\|\mathbf{s}_{\boldsymbol{\phi}}(\mathbf{x}_t, y, t)\right\|_2^2\right] - 2\mathbb{E}_{q(\mathbf{x}_0,\mathbf{x}_t, y)}\left[\langle \mathbf{s}_{\boldsymbol{\phi}}(\mathbf{x}_t, y, t), \nabla_{\mathbf{x}_t} \log q(\mathbf{x}_t | \mathbf{x}_0) \rangle\right]  + C_1,
\end{align}
where $C_1=\mathbb{E}_{q(\mathbf{x}_0,\mathbf{x}_t, y)}\left[\left\| \nabla_{\mathbf{x}_t} \log q(\mathbf{x}_t | \mathbf{x}_0)\right\|_2^2\right]$ is a constant independent with $\boldsymbol{\phi}$, and $q(\mathbf{x}_t|\mathbf{x}_0,y)=q(\mathbf{x}_t|\mathbf{x}_0) $ because of conditional independent of $\mathbf{x}_t$ and $y$
given $\mathbf{x}_0$ by assumption. We can similarly split the mean squared error of Denoising Score Matching on the source domain into three terms as follows:
\begin{equation}\label{eq:ISDSM_source}
\begin{aligned}
& \mathbb{E}_{p(\mathbf{x}_0, y)} \mathbb{E}_{p(\mathbf{x}_t|\mathbf{x}_0)}\left[\left\|\mathbf{s}_{\boldsymbol{\phi}}(\mathbf{x}_t, y, t)-
\nabla_{\mathbf{x}_t} \log p(\mathbf{x}_t | \mathbf{x}_0)\right\|_2^2 \frac{q(\mathbf{x}_0, y)}{p(\mathbf{x}_0, y)}\right]\\
=& \mathbb{E}_{p(\mathbf{x}_0, \mathbf{x}_t, y)}\left[\left\|\mathbf{s}_{\boldsymbol{\phi}}(\mathbf{x}_t, y, t)\right\|_2^2\frac{q(\mathbf{x}_0, y)}{p(\mathbf{x}_0, y)}\right] - 2\mathbb{E}_{p(\mathbf{x}_0,\mathbf{x}_t, y)}\left[\langle \mathbf{s}_{\boldsymbol{\phi}}(\mathbf{x}_t, y, t), \nabla_{\mathbf{x}_t} \log p(\mathbf{x}_t | \mathbf{x}_0) \rangle\frac{q(\mathbf{x}_0, y)}{p(\mathbf{x}_0, y)}\right] \\
&+ C_2,
\end{aligned}
\end{equation}
where $C_2$ is a constant independent with $\boldsymbol{\phi}$.

It is obvious to show that the first term in Eq \eqref{eq:DSM_target} is equal to the first term in Eq \eqref{eq:ISDSM_source}, i.e., 
\[
\begin{aligned}
&\mathbb{E}_{p(\mathbf{x}_0, \mathbf{x}_t, y)}\left[\left\|\mathbf{s}_{\boldsymbol{\phi}}(\mathbf{x}_t, y, t)\right\|_2^2\frac{q(\mathbf{x}_0, y)}{p(\mathbf{x}_0, y)}\right]\\
= &\int_{\mathbf{x}_0} \int_{\mathbf{x}_t} \int_y p(\mathbf{x}_0, y) p(\mathbf{x}_t| \mathbf{x}_0) \left\|\mathbf{s}_{\boldsymbol{\phi}}(\mathbf{x}_t, y, t)\right\|_2^2\frac{q(\mathbf{x}_0, y)}{p(\mathbf{x}_0, y)}d \mathbf{x}_0 d \mathbf{x}_t d y\\
= &\int_{\mathbf{x}_0} \int_{\mathbf{x}_t} \int_y p(\mathbf{x}_0, y) q(\mathbf{x}_t| \mathbf{x}_0) \left\|\mathbf{s}_{\boldsymbol{\phi}}(\mathbf{x}_t, y, t)\right\|_2^2\frac{q(\mathbf{x}_0, y)}{p(\mathbf{x}_0, y)}d \mathbf{x}_0 d \mathbf{x}_t d y\\
= &\int_{\mathbf{x}_0} \int_{\mathbf{x}_t} \int_{y} q(\mathbf{x}_0,\mathbf{x}_t, y) \left\|\mathbf{s}_{\boldsymbol{\phi}}(\mathbf{x}_t, y, t)\right\|_2^2d \mathbf{x}_0 d \mathbf{x}_t d y\\
= &\mathbb{E}_{q(\mathbf{x}_0, \mathbf{x}_t, y)}\left[\left\|\mathbf{s}_{\boldsymbol{\phi}}(\mathbf{x}_t, y, t)\right\|_2^2\right].
\end{aligned}
\]
And the second term is also equivalent:
\[
\begin{aligned}
&\mathbb{E}_{p(\mathbf{x}_0,\mathbf{x}_t, y)}\left[\langle \mathbf{s}_{\boldsymbol{\phi}}(\mathbf{x}_t, y, t), \nabla_{\mathbf{x}_t} \log p(\mathbf{x}_t | \mathbf{x}_0) \rangle\frac{q(\mathbf{x}_0, y)}{p(\mathbf{x}_0, y)}\right]\\
=& \int_{\mathbf{x}_0} \int_{\mathbf{x}_t} \int_y p(\mathbf{x}_0, \mathbf{x}_t, y) \langle \mathbf{s}_{\boldsymbol{\phi}}(\mathbf{x}_t, y, t),  \frac{\nabla_{\mathbf{x}_t} p(\mathbf{x}_t | \mathbf{x}_0)}{p(\mathbf{x}_t | \mathbf{x}_0)}  \rangle\frac{q(\mathbf{x}_0, y)}{p(\mathbf{x}_0, y)}d \mathbf{x}_0 d \mathbf{x}_t d y\\
=& \int_{\mathbf{x}_0} \int_{\mathbf{x}_t} \int_y p(\mathbf{x}_0, \mathbf{x}_t, y) \langle \mathbf{s}_{\boldsymbol{\phi}}(\mathbf{x}_t, y, t),  \frac{\nabla_{\mathbf{x}_t} q(\mathbf{x}_t | \mathbf{x}_0)}{p(\mathbf{x}_t | \mathbf{x}_0)}  \rangle\frac{q(\mathbf{x}_0, y)}{p(\mathbf{x}_0, y)}d \mathbf{x}_0 d \mathbf{x}_t d y\\
=& \int_{\mathbf{x}_0} \int_{\mathbf{x}_t} \int_y \langle \mathbf{s}_{\boldsymbol{\phi}}(\mathbf{x}_t, y, t),  \nabla_{\mathbf{x}_t} q(\mathbf{x}_t | \mathbf{x}_0)  \rangle q(\mathbf{x}_0, y)d \mathbf{x}_0 d \mathbf{x}_t d y\\
=& \int_{\mathbf{x}_0} \int_{\mathbf{x}_t} \int_y \langle \mathbf{s}_{\boldsymbol{\phi}}(\mathbf{x}_t, y, t),  \nabla_{\mathbf{x}_t} \log q(\mathbf{x}_t | \mathbf{x}_0)  \rangle q(\mathbf{x}_t | \mathbf{x}_0) q(\mathbf{x}_0, y)d \mathbf{x}_0 d \mathbf{x}_t d y\\
=&\mathbb{E}_{q(\mathbf{x}_0,\mathbf{x}_t, y)}\left[\langle \mathbf{s}_{\boldsymbol{\phi}}(\mathbf{x}_t, y, t), \nabla_{\mathbf{x}_t} \log q(\mathbf{x}_t | \mathbf{x}_0) \rangle\right].
\end{aligned}
\]
\end{proof}

According to Lemma \ref{DSM-equ-guidance}, the optimal solution satisfies
\[
\boldsymbol{\phi}^*= \underset{\boldsymbol{\phi}}{\arg \min } ~\mathbb{E}_t\left\{\lambda(t) \mathbb{E}_{p(\mathbf{x}_0, y)} \mathbb{E}_{p(\mathbf{x}_t|\mathbf{x}_0)}\left[\left\|\mathbf{s}_{\boldsymbol{\phi}}(\mathbf{x}_t, y,  t)-
\nabla_{\mathbf{x}_t} \log p(\mathbf{x}_t | \mathbf{x}_0)\right\|_2^2 \frac{q(\mathbf{x}_0, y)}{p(\mathbf{x}_0, y)}\right]\right\},
\]
Then, we use Importance Weighted Conditional Denoising Score Matching on the source domain to get the analytic form of $\mathbf{s}_{\boldsymbol{\phi}^*}$ as follows: 
\[
\mathbf{s}_{\boldsymbol{\phi}^*}(\mathbf{x}_t, y, t) = \frac{\mathbb{E}_{p(\mathbf{x}_0|\mathbf{x}_t, y)}\left[\nabla_{\mathbf{x}_t} \log p(\mathbf{x}_t | \mathbf{x}_0)\frac{q(\mathbf{x}_0, y)}{p(\mathbf{x}_0, y)}   \right]}{\mathbb{E}_{p(\mathbf{x}_0|\mathbf{x}_t, y)}\left[\frac{q(\mathbf{x}_0, y)}{p(\mathbf{x}_0, y)}   \right]}.
\]

Moreover, the RHS of Eq \eqref{eq:guide_conditional} can be rewritten as:
\[
\begin{aligned}
\text{RHS} =& \nabla_{\mathbf{x}_t} \log p_t(\mathbf{x}_t|y)+ \nabla_{\mathbf{x}_t} \log \mathbb{E}_{p(\mathbf{x}_0|\mathbf{x}_t, y)}\left[\frac{q(\mathbf{x}_0, y)}{p(\mathbf{x}_0, y)}\right]\\
=& \nabla_{\mathbf{x}_t} \log p_t(\mathbf{x}_t|y)+ \frac{\nabla_{\mathbf{x}_t}  \mathbb{E}_{p(\mathbf{x}_0|\mathbf{x}_t, y)}\left[\frac{q(\mathbf{x}_0, y)}{p(\mathbf{x}_0, y)}\right]}{\mathbb{E}_{p(\mathbf{x}_0|\mathbf{x}_t, y)}\left[\frac{q(\mathbf{x}_0, y)}{p(\mathbf{x}_0, y)}\right]}\\
=& \nabla_{\mathbf{x}_t} \log p_t(\mathbf{x}_t|y)+ \frac{  \mathbb{E}_{p(\mathbf{x}_0|\mathbf{x}_t, y)}\left[\frac{q(\mathbf{x}_0, y)}{p(\mathbf{x}_0, y)}\nabla_{\mathbf{x}_t} \log p(\mathbf{x}_0|\mathbf{x}_t, y)\right]}{\mathbb{E}_{p(\mathbf{x}_0|\mathbf{x}_t, y)}\left[\frac{q(\mathbf{x}_0, y)}{p(\mathbf{x}_0, y)}\right]}.
\end{aligned}
\]
Since 
\[
\begin{aligned} 
\nabla_{\mathbf{x}_t} \log p(\mathbf{x}_0|\mathbf{x}_t, y)&=\nabla_{\mathbf{x}_t} \log p(\mathbf{x}_t|\mathbf{x}_0, y)+ \nabla_{\mathbf{x}_t} \log p(\mathbf{x}_0|y) - \nabla_{\mathbf{x}_t} \log p_t(\mathbf{x}_t|y)\\
&= \nabla_{\mathbf{x}_t} \log p(\mathbf{x}_ t|\mathbf{x}_0, y) - \nabla_{\mathbf{x}_t} \log p_t(\mathbf{x}_t|y),\\
&= \nabla_{\mathbf{x}_t} \log p(\mathbf{x}_ t|\mathbf{x}_0) - \nabla_{\mathbf{x}_t} \log p_t(\mathbf{x}_t|y),
\end{aligned}
\]
we can further simplify the RHS of Eq \eqref{eq:guide_conditional} as follows:
\[
\begin{aligned}
\text{RHS} =& \nabla_{\mathbf{x}_t} \log p_t(\mathbf{x}_t|y)+ \frac{  \mathbb{E}_{p(\mathbf{x}_0|\mathbf{x}_t, y)}\left[\frac{q(\mathbf{x}_0, y)}{p(\mathbf{x}_0, y)}\nabla_{\mathbf{x}_t} \log p(\mathbf{x}_t|\mathbf{x}_0)\right]}{\mathbb{E}_{p(\mathbf{x}_0|\mathbf{x}_t, y)}\left[\frac{q(\mathbf{x}_0, y)}{p(\mathbf{x}_0, y)}\right]} -\nabla_{\mathbf{x}_t} \log p_t(\mathbf{x}_t|y)\\
=&\frac{\mathbb{E}_{p(\mathbf{x}_0|\mathbf{x}_t, y)}\left[\nabla_{\mathbf{x}_t} \log p(\mathbf{x}_t | \mathbf{x}_0)\frac{q(\mathbf{x}_0, y)}{p(\mathbf{x}_0, y)}   \right]}{\mathbb{E}_{p(\mathbf{x}_0|\mathbf{x}_t, y)}\left[\frac{q(\mathbf{x}_0, y)}{p(\mathbf{x}_0, y)}   \right]}\\
=& \mathbf{s}_{\boldsymbol{\phi}^*}(\mathbf{x}_t, t).
\end{aligned}
\]
Thereby, we finish the proof.
\end{proof}

\subsection{Proof for Cycle Regularization}\label{ap:proof_cycle_regularization}
\begin{proof}[Proof of Eq \eqref{eq:cycle_reg}]
$$
\begin{aligned}
\mathbb{E}_{p(\mathbf{x}_0|\mathbf{x}_t)}\left[\frac{q(\mathbf{x}_0)}{p(\mathbf{x}_0)}\right]
=&\int p(\mathbf{x}_0 | \mathbf{x}_t) \frac{q(\mathbf{x}_0)}{p(\mathbf{x}_0)} d \mathbf{x}_0=\int \frac{p(\mathbf{x}_t | \mathbf{x}_0) p(\mathbf{x}_0)}{p_t(\mathbf{x}_t)}\frac{q(\mathbf{x}_0)}{p(\mathbf{x}_0)} d \mathbf{x}_0\\
=& \int \frac{q(\mathbf{x}_t | \mathbf{x}_0) p(\mathbf{x}_0)}{p_t(\mathbf{x}_t)}\frac{q(\mathbf{x}_0)}{p(\mathbf{x}_0)} d \mathbf{x}_0= \int q(\mathbf{x}_t | \mathbf{x}_0)\frac{q(\mathbf{x}_0)}{p_t(\mathbf{x}_t)} d \mathbf{x}_0\\
=& \int \frac{q(\mathbf{x}_0 | \mathbf{x}_t) q_t(\mathbf{x}_t)}{q(\mathbf{x}_0)}\frac{q(\mathbf{x}_0)}{p_t(\mathbf{x}_t)} d \mathbf{x}_0=\int q(\mathbf{x}_0 | \mathbf{x}_t) \frac{q_t(\mathbf{x}_t)}{p_t(\mathbf{x}_t)} d \mathbf{x}_0\\
=&\mathbb{E}_{q(\mathbf{x}_0 | \mathbf{x}_t)}\left[\frac{q_t(\mathbf{x}_t)}{p_t(\mathbf{x}_t)}\right],
\end{aligned}
$$
where $p_t(\mathbf{x}_t) = \int p(\mathbf{x}_0) p(\mathbf{x}_t| \mathbf{x}_0) d\mathbf{x}_0 $ and $q_t(\mathbf{x}_t) = \int q(\mathbf{x}_0) q(\mathbf{x}_t|\mathbf{x}_0) d\mathbf{x}_0 $ are the marginal distributions at time $t$ of source and target distributions, respectively.
\end{proof}


\section{More Details on Experiments}

\subsection{Algorithms for TGDP}\label{app:alg_box}

TGDP adopts Algorithm \ref{alg:DC_training} and \ref{alg:DC_training_time} for training a domain classifier and Algorithm \ref{alg:guidance_training} and \ref{alg:guidance_training_regularized} for training the guidance network.

\begin{algorithm}[htb]
    \centering
    \caption{Training a domain classifier}\label{alg:DC_training}
\begin{algorithmic}[1]
      \REQUIRE Samples from the marginal distribution of the source domain $p(\mathbf{x})$ and target domain $q(\mathbf{x})$, and initial weights of domain classifier $\boldsymbol{\omega}$.
      \REPEAT 
      \STATE Sample mini-batch data from source distribution and target distribution respectively with batch size $b$.
    \STATE Take gradient descent step on
$$
\nabla_{\boldsymbol{\omega}}\left\{-\frac{1}{b}\sum_{\mathbf{x}_i\in p}\left[\log c_{\boldsymbol{\omega}}(\mathbf{x}_i)\right]-\frac{1}{b}\sum_{\mathbf{x}'_i\in q}\left[\log (1-c_{\boldsymbol{\omega}}(\mathbf{x}'_i))\right] \right\}.
$$
\UNTIL{converged.}
    \STATE \textbf{return} weights of domain classifier  $\boldsymbol{\omega}$.
	\end{algorithmic}
\end{algorithm}

\begin{algorithm}[htb]
    \centering
    \caption{Training a time-dependent domain classifier}\label{alg:DC_training_time}
\begin{algorithmic}[1]
      \REQUIRE Samples from the marginal distribution of the source domain $p(\mathbf{x})$ and target domain $q(\mathbf{x})$, pre-defined forward transition $p(\mathbf{x}_t |\mathbf{x}_0)$, and initial weights of domain classifier $\boldsymbol{\omega}$.
      \REPEAT 
      \STATE Sample mini-batch data from source distribution and target distribution respectively with batch size $b$.
    \STATE Sample time $t \sim \operatorname{Uniform}(\{1, \ldots, T\})$ and perturb $\mathbf{x}_0$ by forward transition $p(\mathbf{x}_t |\mathbf{x}_0)$.
    \STATE Take gradient descent step on
$$
\nabla_{\boldsymbol{\omega}}\left\{-\frac{1}{b}\sum_{\mathbf{x}_0\sim p}\sum_{\mathbf{x}_t |\mathbf{x}_0}\left[\log c_{\boldsymbol{\omega}}(\mathbf{x}_t,t)\right] -\frac{1}{b}\sum_{\mathbf{x}_0\sim q}\sum_{\mathbf{x}_t |\mathbf{x}_0}\left[\log (1-c_{\boldsymbol{\omega}}(\mathbf{x}_t,t))\right] \right\}.
$$
\UNTIL{converged.}
    \STATE \textbf{return} weights of time-dependent domain classifier  $\boldsymbol{\omega}$.
	\end{algorithmic}
\end{algorithm}

\begin{algorithm}[htb]
    \centering
    \caption{Training a guidance network (without regularization)}\label{alg:guidance_training}
\begin{algorithmic}[1]
      \REQUIRE Samples from the marginal distribution of the source domain $p(\mathbf{x})$, pre-defined forward transition $p(\mathbf{x}_t |\mathbf{x}_0)$, pre-trained domain classifier $c_{\boldsymbol{\omega}}$, and initial weights of guidance network  $\boldsymbol{\psi}$.
      \REPEAT
      \STATE Sample mini-batch data from source distribution $\mathbf{x}_0$ with batch size $b$.
      \STATE Sample time $t \sim \operatorname{Uniform}(\{1, \ldots, T\})$ and perturb $\mathbf{x}_0$ by forward transition $p(\mathbf{x}_t |\mathbf{x}_0)$.
\STATE Take gradient descent step on
$$
\nabla_{\boldsymbol{\psi}}\left\{\frac{1}{b}\sum\limits_{\mathbf{x}_0, \mathbf{x}_t}\left[\left\|h_{\boldsymbol{\psi}}\left(\mathbf{x}_t, t\right)-c_{\boldsymbol{\omega}}(\mathbf{x}_0)\right\|_2^2\right]\right\}.
$$
\UNTIL{converged.}
    \STATE \textbf{return} weights of guidance network  $\boldsymbol{\psi}$.
	\end{algorithmic}
\end{algorithm}

\begin{algorithm}[htb]
    \centering
    \caption{Training a guidance network (with regularization)}\label{alg:guidance_training_regularized}
\begin{algorithmic}[1]
      \REQUIRE Samples from the marginal distribution of the source domain $p(\mathbf{x})$ and target domain $q(\mathbf{x})$, pre-trained diffusion model on source distribution $s_{\text{source}}(\mathbf{x}_t, t)$, pre-defined forward transition $q(\mathbf{x}_t | \mathbf{x}_0), p(\mathbf{x}_t |\mathbf{x}_0)$, pre-trained domain classifier $c_{\boldsymbol{\omega}}(\mathbf{x}_0)$ and time dependent domain classifier $c'_{\boldsymbol{\omega}}(\mathbf{x}_0, t)$, hyperparameter $\eta_1, \eta_2$, and initial weights of guidance network  $\boldsymbol{\psi}$.
      \REPEAT
      \STATE Sample mini-batch data from source distribution $\mathbf{x}_0$ with batch size $b$.
      \STATE Perturb $\mathbf{x}_0$ by forward transition $p(\mathbf{x}_t |\mathbf{x}_0)$.
      \STATE $\mathcal{L}_{\text{guidance}}(\boldsymbol{\psi}) =\frac{1}{b}\sum\limits_{\mathbf{x}_0, \mathbf{x}_t, t}\left[\left\|h_{\boldsymbol{\psi}}\left(\mathbf{x}_t, t\right)-(1-c_{\boldsymbol{\omega}}(\mathbf{x}_0))/c_{\boldsymbol{\omega}}(\mathbf{x}_0)\right\|_2^2\right]$
      \STATE Sample mini-batch data from target distribution $\mathbf{x}'_0$ with batch size $b$.
       \STATE Sample time $t \sim \operatorname{Uniform}(\{1, \ldots, T\})$ and perturb $\mathbf{x}'_0$ by forward transition $q(\mathbf{x}'_t |\mathbf{x}'_0)$.
      \STATE $\mathcal{L}_{\text{cycle}}=\frac{1}{b}\sum\limits_{\mathbf{x}'_0, \mathbf{x}'_t, t}\left[\left\|h_{\boldsymbol{\psi}}\left(\mathbf{x}'_t, t\right)-c'_{\boldsymbol{\omega}}(\mathbf{x}'_0, t)\right\|_2^2\right].$
      \STATE $\mathcal{L}_{\text{consistence}}= \frac{1}{b}\sum\limits_{\mathbf{x}'_0, \mathbf{x}'_t, t}\left[\left\|s_{\text{source}}(\mathbf{x}'_t, t)+ \nabla_{\mathbf{x}'_t} \log h_{\boldsymbol{\psi}}\left(\mathbf{x}'_t, t\right)-
\nabla_{\mathbf{x}'_t} \log q(\mathbf{x}_t | \mathbf{x}'_0)\right\|_2^2\right].$
    \STATE Take gradient descent step on
$$
\nabla_{\boldsymbol{\psi}}\left\{\mathcal{L}_{\text{guidance}}+ \eta_1 \ \mathcal{L}_{\text{cycle}} + \eta_2 \ \mathcal{L}_{\text{consistence}} \right\}.
$$
\UNTIL{converged.}
    \STATE \textbf{return} weights of guidance network  $\boldsymbol{\psi}$.
	\end{algorithmic}
\end{algorithm}

\subsection{Ablation Studies on simulations} \label{ap:regu}
In Figure \ref{fig:abl}, we demonstrate the ablation studies on simulations. We can see that only using the consistency regularization term (Figure \ref{fig:abl} (b)) is not able to recover the true distribution in the target domain. Our guidance loss together with cycle regularization can learn a good approximation of target distribution while adding consistency regularization can achieve better performance.
\begin{figure}[htb]
\label{fig:abl}
\centering
\subfigure[Source diffusion]{\includegraphics[width=0.28\textwidth]{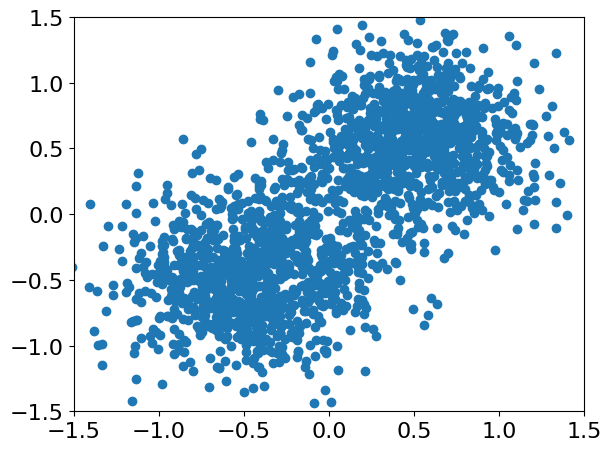}}
\subfigure[Only $\mathcal{L}_{\text{guidance}}$]{\includegraphics[width=0.28\textwidth]{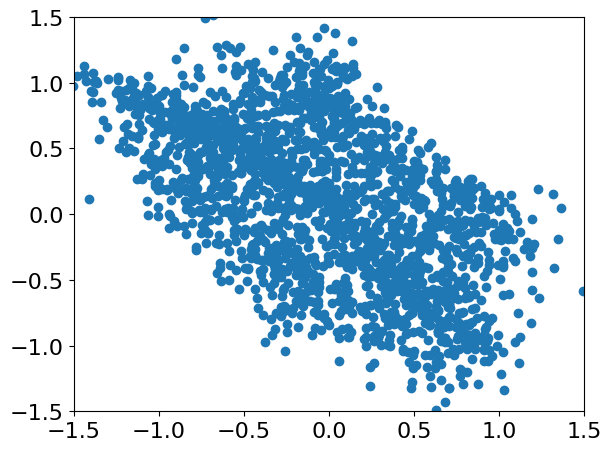}}
\subfigure[$\mathcal{L}_{\text{guidance}}+\mathcal{L}_{\text{cycle}}$]{\includegraphics[width=0.28\textwidth]{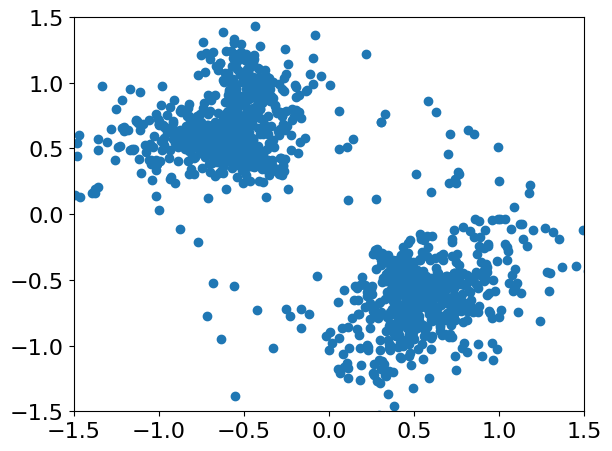}}
\subfigure[$\mathcal{L}_{\text{guidance}}+\mathcal{L}_{\text{consistence}}$]{\includegraphics[width=0.28\textwidth]{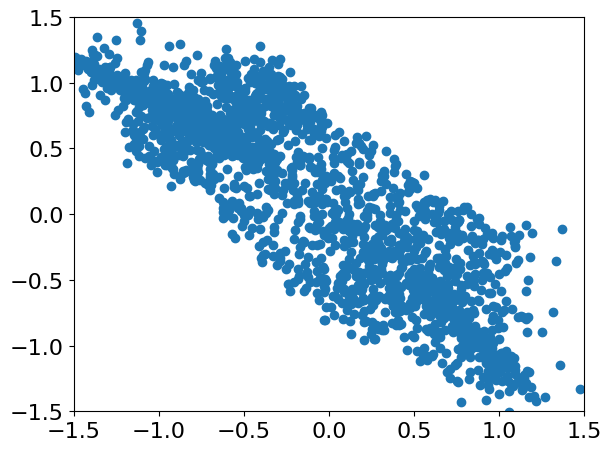}}
\subfigure[$\mathcal{L}_{\text{guidance}}+\mathcal{L}_{\text{cycle}}+\mathcal{L}_{\text{consistence}}$]{\includegraphics[width=0.28\textwidth]{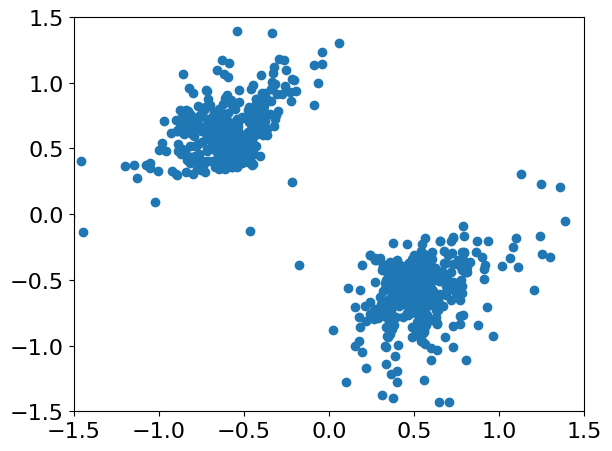}}
\subfigure[Target]{\includegraphics[width=0.28\textwidth]{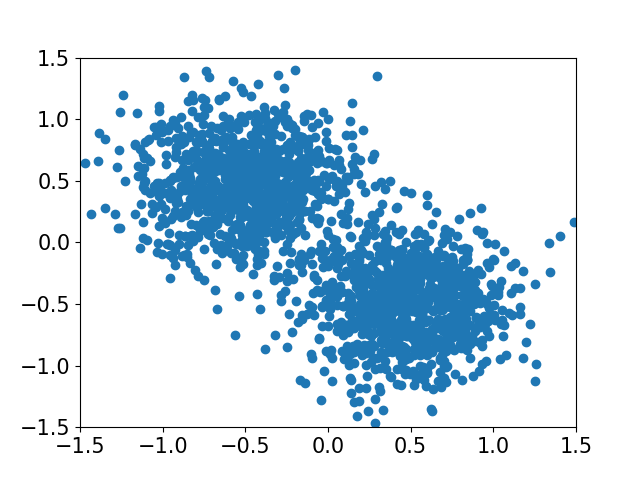}}
\caption{An illustration of the effectiveness of cycle regularization and consistency regularization proposed in Section \ref{sec:add_reg}.}
\end{figure}

\subsection{Implementation details for ECG Benchmark}
\label{ap:imple_detail}
For TGDP and all of the baseline methods, we utilize the same architecture as the conditional generative models for ECG data, SSSM-ECG \cite{LopezAlcaraz2023DiffusionbasedCE}. For {\it Vanilla Diffusion}, we train the diffusion model for 100k iterations by Adam optimizer with a learning rate 2$\mathrm{e}^{-4}$. For {\it Finetune Generator}, we finetune the pre-trained diffusion model for 50k iterations by Adam optimizer with a learning rate 2$\mathrm{e}^{-5}$. For TGDP, we adopt a 4-layer MLP with 512 hidden units and SiLU activation function as the backbone of the guidance network. We train the guidance network for 50k iterations by Adam optimizer with a learning rate 2$\mathrm{e}^{-4}$. For utility evaluation, we adopt the same architecture, xresnet1d50 \cite{Strodthoff2020DeepLF}, as the backbone. We train the classifier from sketch for 50 epochs with with a learning rate 1e-2. For Finetune Classifier, we finetune a pre-trained classifier for 30 epochs with with a learning rate 1e-3.

\section*{NeurIPS Paper Checklist}
\begin{enumerate}

\item {\bf Claims}
    \item[] Question: Do the main claims made in the abstract and introduction accurately reflect the paper's contributions and scope?
    \item[] \answerYes{}.
    \item[] Justification: We summarize the contributions and scope in Abstract as "we prove the optimal diffusion model for the target domain integrates pre-trained diffusion models with additional guidance" and we also summarize the main contribution of our paper in Introduction.
     \item[] Guidelines:
    \begin{itemize}
        \item The answer NA means that the abstract and introduction do not include the claims made in the paper.
        \item The abstract and/or introduction should clearly state the claims made, including the contributions made in the paper and important assumptions and limitations. A No or NA answer to this question will not be perceived well by the reviewers. 
        \item The claims made should match theoretical and experimental results, and reflect how much the results can be expected to generalize to other settings. 
        \item It is fine to include aspirational goals as motivation as long as it is clear that these goals are not attained by the paper. 
    \end{itemize}

\item {\bf Limitations}
    \item[] Question: Does the paper discuss the limitations of the work performed by the authors?
    \item[] Answer: \answerYes{}.
    \item[] Justification: We summarize the limitations of this paper in Section \ref{sec:conclusion}.
    \item[] Guidelines:
    \begin{itemize}
        \item The answer NA means that the paper has no limitation while the answer No means that the paper has limitations, but those are not discussed in the paper. 
        \item The authors are encouraged to create a separate "Limitations" section in their paper.
        \item The paper should point out any strong assumptions and how robust the results are to violations of these assumptions (e.g., independence assumptions, noiseless settings, model well-specification, asymptotic approximations only holding locally). The authors should reflect on how these assumptions might be violated in practice and what the implications would be.
        \item The authors should reflect on the scope of the claims made, e.g., if the approach was only tested on a few datasets or with a few runs. In general, empirical results often depend on implicit assumptions, which should be articulated.
        \item The authors should reflect on the factors that influence the performance of the approach. For example, a facial recognition algorithm may perform poorly when image resolution is low or images are taken in low lighting. Or a speech-to-text system might not be used reliably to provide closed captions for online lectures because it fails to handle technical jargon.
        \item The authors should discuss the computational efficiency of the proposed algorithms and how they scale with dataset size.
        \item If applicable, the authors should discuss possible limitations of their approach to address problems of privacy and fairness.
        \item While the authors might fear that complete honesty about limitations might be used by reviewers as grounds for rejection, a worse outcome might be that reviewers discover limitations that aren't acknowledged in the paper. The authors should use their best judgment and recognize that individual actions in favor of transparency play an important role in developing norms that preserve the integrity of the community. Reviewers will be specifically instructed to not penalize honesty concerning limitations.
    \end{itemize}

\item {\bf Theory Assumptions and Proofs}
    \item[] Question: For each theoretical result, does the paper provide the full set of assumptions and a complete (and correct) proof?
    \item[] Answer: \answerYes{}.
    \item[] Justification: We carefully state the assumptions in the statement of all theorems in this paper and all of the theorems are proven in the Appendix.
     \item[] Guidelines:
    \begin{itemize}
        \item The answer NA means that the paper does not include theoretical results. 
        \item All the theorems, formulas, and proofs in the paper should be numbered and cross-referenced.
        \item All assumptions should be clearly stated or referenced in the statement of any theorems.
        \item The proofs can either appear in the main paper or the supplemental material, but if they appear in the supplemental material, the authors are encouraged to provide a short proof sketch to provide intuition. 
        \item Inversely, any informal proof provided in the core of the paper should be complemented by formal proofs provided in appendix or supplemental material.
        \item Theorems and Lemmas that the proof relies upon should be properly referenced. 
    \end{itemize}

    \item {\bf Experimental Result Reproducibility}
    \item[] Question: Does the paper fully disclose all the information needed to reproduce the main experimental results of the paper to the extent that it affects the main claims and/or conclusions of the paper (regardless of whether the code and data are provided or not)?
    \item[] Answer: \answerYes{}.
    \item[] Justification: We provide all of the details to reproduce the experiments in this paper and we also release our code. 
    \item[] Guidelines:
    \begin{itemize}
        \item The answer NA means that the paper does not include experiments.
        \item If the paper includes experiments, a No answer to this question will not be perceived well by the reviewers: Making the paper reproducible is important, regardless of whether the code and data are provided or not.
        \item If the contribution is a dataset and/or model, the authors should describe the steps taken to make their results reproducible or verifiable. 
        \item Depending on the contribution, reproducibility can be accomplished in various ways. For example, if the contribution is a novel architecture, describing the architecture fully might suffice, or if the contribution is a specific model and empirical evaluation, it may be necessary to either make it possible for others to replicate the model with the same dataset, or provide access to the model. In general. releasing code and data is often one good way to accomplish this, but reproducibility can also be provided via detailed instructions for how to replicate the results, access to a hosted model (e.g., in the case of a large language model), releasing of a model checkpoint, or other means that are appropriate to the research performed.
        \item While NeurIPS does not require releasing code, the conference does require all submissions to provide some reasonable avenue for reproducibility, which may depend on the nature of the contribution. For example
        \begin{enumerate}
            \item If the contribution is primarily a new algorithm, the paper should make it clear how to reproduce that algorithm.
            \item If the contribution is primarily a new model architecture, the paper should describe the architecture clearly and fully.
            \item If the contribution is a new model (e.g., a large language model), then there should either be a way to access this model for reproducing the results or a way to reproduce the model (e.g., with an open-source dataset or instructions for how to construct the dataset).
            \item We recognize that reproducibility may be tricky in some cases, in which case authors are welcome to describe the particular way they provide for reproducibility. In the case of closed-source models, it may be that access to the model is limited in some way (e.g., to registered users), but it should be possible for other researchers to have some path to reproducing or verifying the results.
        \end{enumerate}
    \end{itemize}

\item {\bf Open access to data and code}
    \item[] Question: Does the paper provide open access to the data and code, with sufficient instructions to faithfully reproduce the main experimental results, as described in supplemental material?
    \item[] Answer: \answerYes{}.
    \item[] Justification: We provide the code for all of the experiments together with clear instructions.
        \item[] Guidelines:
    \begin{itemize}
        \item The answer NA means that paper does not include experiments requiring code.
        \item Please see the NeurIPS code and data submission guidelines (\url{https://nips.cc/public/guides/CodeSubmissionPolicy}) for more details.
        \item While we encourage the release of code and data, we understand that this might not be possible, so “No” is an acceptable answer. Papers cannot be rejected simply for not including code, unless this is central to the contribution (e.g., for a new open-source benchmark).
        \item The instructions should contain the exact command and environment needed to run to reproduce the results. See the NeurIPS code and data submission guidelines (\url{https://nips.cc/public/guides/CodeSubmissionPolicy}) for more details.
        \item The authors should provide instructions on data access and preparation, including how to access the raw data, preprocessed data, intermediate data, and generated data, etc.
        \item The authors should provide scripts to reproduce all experimental results for the new proposed method and baselines. If only a subset of experiments are reproducible, they should state which ones are omitted from the script and why.
        \item At submission time, to preserve anonymity, the authors should release anonymized versions (if applicable).
        \item Providing as much information as possible in supplemental material (appended to the paper) is recommended, but including URLs to data and code is permitted.
    \end{itemize}

\item {\bf Experimental Setting/Details}
    \item[] Question: Does the paper specify all the training and test details (e.g., data splits, hyperparameters, how they were chosen, type of optimizer, etc.) necessary to understand the results?
    \item[] Answer: \answerYes{}.
    \item[] Justification: Experimental Setting/Details can be found in our paper and code.
    \item[] Guidelines:
    \begin{itemize}
        \item The answer NA means that the paper does not include experiments.
        \item The experimental setting should be presented in the core of the paper to a level of detail that is necessary to appreciate the results and make sense of them.
        \item The full details can be provided either with the code, in appendix, or as supplemental material.
    \end{itemize}

\item {\bf Experiment Statistical Significance}
    \item[] Question: Does the paper report error bars suitably and correctly defined or other appropriate information about the statistical significance of the experiments?
    \item[] Answer: \answerYes{}.
    \item[] Justification: The standard deviation of the experiments in ECG Benchmark has been provided in Table \ref{tab:utility_newest}.
        \item[] Guidelines:
    \begin{itemize}
        \item The answer NA means that the paper does not include experiments.
        \item The authors should answer "Yes" if the results are accompanied by error bars, confidence intervals, or statistical significance tests, at least for the experiments that support the main claims of the paper.
        \item The factors of variability that the error bars are capturing should be clearly stated (for example, train/test split, initialization, random drawing of some parameter, or overall run with given experimental conditions).
        \item The method for calculating the error bars should be explained (closed form formula, call to a library function, bootstrap, etc.)
        \item The assumptions made should be given (e.g., Normally distributed errors).
        \item It should be clear whether the error bar is the standard deviation or the standard error of the mean.
        \item It is OK to report 1-sigma error bars, but one should state it. The authors should preferably report a 2-sigma error bar than state that they have a 96\% CI, if the hypothesis of Normality of errors is not verified.
        \item For asymmetric distributions, the authors should be careful not to show in tables or figures symmetric error bars that would yield results that are out of range (e.g. negative error rates).
        \item If error bars are reported in tables or plots, The authors should explain in the text how they were calculated and reference the corresponding figures or tables in the text.
    \end{itemize}

\item {\bf Experiments Compute Resources}
    \item[] Question: For each experiment, does the paper provide sufficient information on the computer resources (type of compute workers, memory, time of execution) needed to reproduce the experiments?
    \item[] Answer: \answerYes{}.
    \item[] Justification: We utilized a computing cluster equipped with 6 NVIDIA GeForce 3090 GPUs with memory 24268MiB and Intel(R) Xeon(R) Platinum 8352Y CPUs @ 2.20GHz. The computational costs for each of the individual experimental runs can be found in the following table. The total computational time of all experiments in this paper is around 200 GPU hours.

    \begin{table}[htbp]
\centering

\begin{tabular}{c|c|c}
\toprule  Experiments & Memory-Usage & Running Time  \\
\midrule
Experiments in Table \ref{tab:likelihood}   &  1281MiB &  5min \\ 
\midrule
Training Vanilla Diffusion  & 19815MiB & 1h  \\  %
Finetune Generator & 19815MiB &  40min \\  %
Training TGDP & 19597MiB &  40min \\  %
Sampling & 9535MiB &  21h \\  %
\midrule
Experiments in Table \ref{tab:utility_newest}&  6075MB & 10min \\  %

\bottomrule
\end{tabular}
\end{table}

    \item[] Guidelines:
    \begin{itemize}
        \item The answer NA means that the paper does not include experiments.
        \item The paper should indicate the type of compute workers CPU or GPU, internal cluster, or cloud provider, including relevant memory and storage.
        \item The paper should provide the amount of compute required for each of the individual experimental runs as well as estimate the total compute. 
        \item The paper should disclose whether the full research project required more compute than the experiments reported in the paper (e.g., preliminary or failed experiments that didn't make it into the paper). 
    \end{itemize}
    
\item {\bf Code Of Ethics}
    \item[] Question: Does the research conducted in the paper conform, in every respect, with the NeurIPS Code of Ethics \url{https://neurips.cc/public/EthicsGuidelines}?
    \item[] Answer: \answerYes{}.
    \item[] Justification: We review the NeurIPS Code of Ethics and ensure our compliance with its requirements.
    \item[] Guidelines:
    \begin{itemize}
        \item The answer NA means that the authors have not reviewed the NeurIPS Code of Ethics.
        \item If the authors answer No, they should explain the special circumstances that require a deviation from the Code of Ethics.
        \item The authors should make sure to preserve anonymity (e.g., if there is a special consideration due to laws or regulations in their jurisdiction).
    \end{itemize}

\item {\bf Broader Impacts}
    \item[] Question: Does the paper discuss both potential positive societal impacts and negative societal impacts of the work performed?
    \item[] Answer: \answerYes{}.
    \item[] Justification: We summarize the potential negative societal impacts of this paper together with the corresponding solutions to mitigate the negative impacts in Section \ref{sec:conclusion}.
    \item[] Guidelines:
    \begin{itemize}
        \item The answer NA means that there is no societal impact of the work performed.
        \item If the authors answer NA or No, they should explain why their work has no societal impact or why the paper does not address societal impact.
        \item Examples of negative societal impacts include potential malicious or unintended uses (e.g., disinformation, generating fake profiles, surveillance), fairness considerations (e.g., deployment of technologies that could make decisions that unfairly impact specific groups), privacy considerations, and security considerations.
        \item The conference expects that many papers will be foundational research and not tied to particular applications, let alone deployments. However, if there is a direct path to any negative applications, the authors should point it out. For example, it is legitimate to point out that an improvement in the quality of generative models could be used to generate deepfakes for disinformation. On the other hand, it is not needed to point out that a generic algorithm for optimizing neural networks could enable people to train models that generate Deepfakes faster.
        \item The authors should consider possible harms that could arise when the technology is being used as intended and functioning correctly, harms that could arise when the technology is being used as intended but gives incorrect results, and harms following from (intentional or unintentional) misuse of the technology.
        \item If there are negative societal impacts, the authors could also discuss possible mitigation strategies (e.g., gated release of models, providing defenses in addition to attacks, mechanisms for monitoring misuse, mechanisms to monitor how a system learns from feedback over time, improving the efficiency and accessibility of ML).
    \end{itemize}
    
\item {\bf Safeguards}
    \item[] Question: Does the paper describe safeguards that have been put in place for responsible release of data or models that have a high risk for misuse (e.g., pretrained language models, image generators, or scraped datasets)?
    \item[] Answer: \answerNA{}.
    \item[] Justification: The main contribution of our paper is an efficient way for transferring a pre-trained model on target distribution. We do not release data or models that  have a high risk for misuse.
     \item[] Guidelines:
    \begin{itemize}
        \item The answer NA means that the paper poses no such risks.
        \item Released models that have a high risk for misuse or dual-use should be released with necessary safeguards to allow for controlled use of the model, for example by requiring that users adhere to usage guidelines or restrictions to access the model or implementing safety filters. 
        \item Datasets that have been scraped from the Internet could pose safety risks. The authors should describe how they avoided releasing unsafe images.
        \item We recognize that providing effective safeguards is challenging, and many papers do not require this, but we encourage authors to take this into account and make a best faith effort.
    \end{itemize}

\item {\bf Licenses for existing assets}
    \item[] Question: Are the creators or original owners of assets (e.g., code, data, models), used in the paper, properly credited and are the license and terms of use explicitly mentioned and properly respected?
    \item[] Answer: \answerYes{}.
    \item[] Justification: The license of the dataset and code used in this paper can be found in the following table.
    
\begin{table}[htbp]
\centering
 \resizebox{\textwidth}{!}{
 \begin{tabular}{c|c|c}
\toprule
\textbf{Assets} & \textbf{License} & \textbf{Link} \\ \midrule
PTB-XL & CC-BY 4.0 & \url{https://physionet.org/content/ptb-xl/1.0.3/} \\ \midrule
ICBEB2018  & CC0: Public Domain & \url{https://www.kaggle.com/datasets/bjoernjostein/china-12lead-ecg-challenge-database} \\ \midrule
SSSD-ECG & MIT License & \url{https://github.com/AI4HealthUOL/SSSD-ECG?tab=readme-ov-file} \\ \midrule
ECG Benchmarks & CC-BY 4.0 & \url{https://github.com/helme/ecg_ptbxl_benchmarking} \\ \bottomrule
\end{tabular}}
\end{table}
    \item[] Guidelines:
    \begin{itemize}
        \item The answer NA means that the paper does not use existing assets.
        \item The authors should cite the original paper that produced the code package or dataset.
        \item The authors should state which version of the asset is used and, if possible, include a URL.
        \item The name of the license (e.g., CC-BY 4.0) should be included for each asset.
        \item For scraped data from a particular source (e.g., website), the copyright and terms of service of that source should be provided.
        \item If assets are released, the license, copyright information, and terms of use in the package should be provided. For popular datasets, \url{paperswithcode.com/datasets} has curated licenses for some datasets. Their licensing guide can help determine the license of a dataset.
        \item For existing datasets that are re-packaged, both the original license and the license of the derived asset (if it has changed) should be provided.
        \item If this information is not available online, the authors are encouraged to reach out to the asset's creators.
    \end{itemize}

\item {\bf New Assets}
    \item[] Question: Are new assets introduced in the paper well documented and is the documentation provided alongside the assets?
    \item[] Answer: \answerYes{}.
    \item[] Justification: The code provided in the supplementary material follows the CC-BY 4.0 license.
    \item[] Guidelines:
    \begin{itemize}
        \item The answer NA means that the paper does not release new assets.
        \item Researchers should communicate the details of the dataset/code/model as part of their submissions via structured templates. This includes details about training, license, limitations, etc. 
        \item The paper should discuss whether and how consent was obtained from people whose asset is used.
        \item At submission time, remember to anonymize your assets (if applicable). You can either create an anonymized URL or include an anonymized zip file.
    \end{itemize}

\item {\bf Crowdsourcing and Research with Human Subjects}
    \item[] Question: For crowdsourcing experiments and research with human subjects, does the paper include the full text of instructions given to participants and screenshots, if applicable, as well as details about compensation (if any)? 
    \item[] Answer: \answerNA{}.
    \item[] Justification: This paper does not involve crowdsourcing nor research with human subjects.
    \item[] Guidelines:
    \begin{itemize}
        \item The answer NA means that the paper does not involve crowdsourcing nor research with human subjects.
        \item Including this information in the supplemental material is fine, but if the main contribution of the paper involves human subjects, then as much detail as possible should be included in the main paper. 
        \item According to the NeurIPS Code of Ethics, workers involved in data collection, curation, or other labor should be paid at least the minimum wage in the country of the data collector. 
    \end{itemize}

\item {\bf Institutional Review Board (IRB) Approvals or Equivalent for Research with Human Subjects}
    \item[] Question: Does the paper describe potential risks incurred by study participants, whether such risks were disclosed to the subjects, and whether Institutional Review Board (IRB) approvals (or an equivalent approval/review based on the requirements of your country or institution) were obtained?
    \item[] Answer: \answerNA{}.
    \item[] Justification: This paper does not involve crowdsourcing nor research with human subjects.
    \item[] Guidelines:
    \begin{itemize}
        \item The answer NA means that the paper does not involve crowdsourcing nor research with human subjects.
        \item Depending on the country in which research is conducted, IRB approval (or equivalent) may be required for any human subjects research. If you obtained IRB approval, you should clearly state this in the paper. 
        \item We recognize that the procedures for this may vary significantly between institutions and locations, and we expect authors to adhere to the NeurIPS Code of Ethics and the guidelines for their institution. 
        \item For initial submissions, do not include any information that would break anonymity (if applicable), such as the institution conducting the review.
    \end{itemize}
\end{enumerate}




\begin{thebibliography}{10}

\bibitem{Alanov2022HyperDomainNetUD}
Aibek Alanov, Vadim Titov, and Dmitry~P. Vetrov.
\newblock Hyperdomainnet: Universal domain adaptation for generative adversarial networks.
\newblock {\em ArXiv}, abs/2210.08884, 2022.

\bibitem{LopezAlcaraz2023DiffusionbasedCE}
Juan Miguel~Lopez Alcaraz and Nils Strodthoff.
\newblock Diffusion-based conditional ecg generation with structured state space models.
\newblock {\em Computers in biology and medicine}, 163:107115, 2023.

\bibitem{Bansal2023UniversalGF}
Arpit Bansal, Hong-Min Chu, Avi Schwarzschild, Soumyadip Sengupta, Micah Goldblum, Jonas Geiping, and Tom Goldstein.
\newblock Universal guidance for diffusion models.
\newblock {\em 2023 IEEE/CVF Conference on Computer Vision and Pattern Recognition Workshops (CVPRW)}, pages 843--852, 2023.

\bibitem{Batzolis2021ConditionalIG}
Georgios Batzolis, Jan Stanczuk, Carola-Bibiane Schonlieb, and Christian Etmann.
\newblock Conditional image generation with score-based diffusion models.
\newblock {\em ArXiv}, abs/2111.13606, 2021.

\bibitem{Chao2022DenoisingLS}
Chen-Hao Chao, Wei-Fang Sun, Bo-Wun Cheng, Yi-Chen Lo, Chia-Che Chang, Yu-Lun Liu, Yu-Lin Chang, Chia-Ping Chen, and Chun-Yi Lee.
\newblock Denoising likelihood score matching for conditional score-based data generation.
\newblock In {\em ICLR}, 2022.

\bibitem{Chung2022DiffusionPS}
Hyungjin Chung, Jeongsol Kim, Michael~T. McCann, Marc~Louis Klasky, and J.~C. Ye.
\newblock Diffusion posterior sampling for general noisy inverse problems.
\newblock {\em ArXiv}, abs/2209.14687, 2022.

\bibitem{ADJ2024CSB}
Agnimitra Dasgupta, Javier Murgoitio-Esandi, Deep Ray, and Assad Oberai.
\newblock Conditional score-based generative models for solving physics-based inverse problems.
\newblock In {\em NIPS workshop, Deep Learning and Inverse Problems}, 2023.

\bibitem{Dhariwal2021DiffusionMB}
Prafulla Dhariwal and Alex Nichol.
\newblock Diffusion models beat gans on image synthesis.
\newblock {\em ArXiv}, abs/2105.05233, 2021.

\bibitem{Duan2023WeditGANFI}
Yuxuan Duan, Li~Niu, Y.~Hong, and Liqing Zhang.
\newblock Weditgan: Few-shot image generation via latent space relocation.
\newblock {\em ArXiv}, abs/2305.06671, 2023.

\bibitem{Fan2023DPOKRL}
Ying Fan, Olivia Watkins, Yuqing Du, Hao Liu, Moonkyung Ryu, Craig Boutilier, P.~Abbeel, Mohammad Ghavamzadeh, Kangwook Lee, and Kimin Lee.
\newblock Dpok: Reinforcement learning for fine-tuning text-to-image diffusion models.
\newblock {\em ArXiv}, abs/2305.16381, 2023.

\bibitem{Gal2022AnII}
Rinon Gal, Yuval Alaluf, Yuval Atzmon, Or~Patashnik, Amit~H. Bermano, Gal Chechik, and Daniel Cohen-Or.
\newblock An image is worth one word: Personalizing text-to-image generation using textual inversion.
\newblock {\em ArXiv}, abs/2208.01618, 2022.

\bibitem{Heusel2017GANsTB}
Martin Heusel, Hubert Ramsauer, Thomas Unterthiner, Bernhard Nessler, and Sepp Hochreiter.
\newblock Gans trained by a two time-scale update rule converge to a local nash equilibrium.
\newblock In {\em Neural Information Processing Systems}, 2017.

\bibitem{Ho2020DDPM}
Jonathan Ho, Ajay Jain, and Pieter Abbeel.
\newblock Denoising diffusion probabilistic models.
\newblock In {\em NeurIPS}, 2020.

\bibitem{Ho2022VideoDM}
Jonathan Ho, Tim Salimans, Alexey Gritsenko, William Chan, Mohammad Norouzi, and David~J. Fleet.
\newblock Video diffusion models.
\newblock {\em ArXiv}, abs/2204.03458, 2022.

\bibitem{Hou2022DynamicWS}
Xingzhong Hou, B.~Liu, Shuai Zhang, Lulin Shi, Zite Jiang, and Haihang You.
\newblock Dynamic weighted semantic correspondence for few-shot image generative adaptation.
\newblock {\em Proceedings of the 30th ACM International Conference on Multimedia}, 2022.

\bibitem{Hyvrinen2005EstimationON}
Aapo Hyv{\"a}rinen.
\newblock Estimation of non-normalized statistical models by score matching.
\newblock {\em J. Mach. Learn. Res.}, 6:695--709, 2005.

\bibitem{Karras2022ElucidatingTD}
Tero Karras, Miika Aittala, Timo Aila, and Samuli Laine.
\newblock Elucidating the design space of diffusion-based generative models.
\newblock {\em ArXiv}, abs/2206.00364, 2022.

\bibitem{Khachatryan2023Text2VideoZeroTD}
Levon Khachatryan, Andranik Movsisyan, Vahram Tadevosyan, Roberto Henschel, Zhangyang Wang, Shant Navasardyan, and Humphrey Shi.
\newblock Text2video-zero: Text-to-image diffusion models are zero-shot video generators.
\newblock {\em ArXiv}, abs/2303.13439, 2023.

\bibitem{Kumari2022MultiConceptCO}
Nupur Kumari, Bin Zhang, Richard Zhang, Eli Shechtman, and Jun-Yan Zhu.
\newblock Multi-concept customization of text-to-image diffusion.
\newblock {\em 2023 IEEE/CVF Conference on Computer Vision and Pattern Recognition (CVPR)}, pages 1931--1941, 2022.

\bibitem{Lee2023AligningTM}
Kimin Lee, Hao Liu, Moonkyung Ryu, Olivia Watkins, Yuqing Du, Craig Boutilier, P.~Abbeel, Mohammad Ghavamzadeh, and Shixiang~Shane Gu.
\newblock Aligning text-to-image models using human feedback.
\newblock {\em ArXiv}, abs/2302.12192, 2023.

\bibitem{Li2023YourDM}
Alexander~C. Li, Mihir Prabhudesai, Shivam Duggal, Ellis~L Brown, and Deepak Pathak.
\newblock Your diffusion model is secretly a zero-shot classifier.
\newblock {\em ArXiv}, abs/2303.16203, 2023.

\bibitem{Li2020FewshotIG}
Yijun Li, Richard Zhang, Jingwan Lu, and Eli Shechtman.
\newblock Few-shot image generation with elastic weight consolidation.
\newblock {\em ArXiv}, abs/2012.02780, 2020.

\bibitem{Lu2023ContrastiveEP}
Cheng Lu, Huayu Chen, Jianfei Chen, Hang Su, Chongxuan Li, and Jun Zhu.
\newblock Contrastive energy prediction for exact energy-guided diffusion sampling in offline reinforcement learning.
\newblock In {\em International Conference on Machine Learning}, 2023.

\bibitem{Lu2022DPMSolverAF}
Cheng Lu, Yuhao Zhou, Fan Bao, Jianfei Chen, Chongxuan Li, and Jun Zhu.
\newblock Dpm-solver: A fast ode solver for diffusion probabilistic model sampling in around 10 steps.
\newblock {\em ArXiv}, abs/2206.00927, 2022.

\bibitem{Moon2022FinetuningDM}
Taehong Moon, Moonseok Choi, Gayoung Lee, Jung-Woo Ha, Juho Lee, AI~Kaist, Naver~AI Lab, and Aitrics.
\newblock Fine-tuning diffusion models with limited data.
\newblock 2022.

\bibitem{Naeem2020ReliableFA}
Muhammad~Ferjad Naeem, Seong~Joon Oh, Youngjung Uh, Yunjey Choi, and Jaejun Yoo.
\newblock Reliable fidelity and diversity metrics for generative models.
\newblock {\em ArXiv}, abs/2002.09797, 2020.

\bibitem{Ng2018AnOA}
Eddie Y.~K. Ng, Feifei Liu, Chengyu Liu, Lina Zhao, X.~Zhang, Xiaoling Wu, Xiaoyan Xu, Yulin Liu, Caiyun Ma, Shoushui Wei, Zhiqiang He, and Jianqing Li.
\newblock An open access database for evaluating the algorithms of electrocardiogram rhythm and morphology abnormality detection.
\newblock {\em Journal of Medical Imaging and Health Informatics}, 2018.

\bibitem{Ojha2021FewshotIG}
Utkarsh Ojha, Yijun Li, Jingwan Lu, Alexei~A. Efros, Yong~Jae Lee, Eli Shechtman, and Richard Zhang.
\newblock Few-shot image generation via cross-domain correspondence.
\newblock {\em 2021 IEEE/CVF Conference on Computer Vision and Pattern Recognition (CVPR)}, pages 10738--10747, 2021.

\bibitem{Radford2021LearningTV}
Alec Radford, Jong~Wook Kim, Chris Hallacy, Aditya Ramesh, Gabriel Goh, Sandhini Agarwal, Girish Sastry, Amanda Askell, Pamela Mishkin, Jack Clark, Gretchen Krueger, and Ilya Sutskever.
\newblock Learning transferable visual models from natural language supervision.
\newblock In {\em International Conference on Machine Learning}, 2021.

\bibitem{Ramesh2022HierarchicalTI}
Aditya Ramesh, Prafulla Dhariwal, Alex Nichol, Casey Chu, and Mark Chen.
\newblock Hierarchical text-conditional image generation with clip latents.
\newblock {\em ArXiv}, abs/2204.06125, 2022.

\bibitem{NEURIPS2020TRE}
Benjamin Rhodes, Kai Xu, and Michael~U. Gutmann.
\newblock Telescoping density-ratio estimation.
\newblock In H.~Larochelle, M.~Ranzato, R.~Hadsell, M.F. Balcan, and H.~Lin, editors, {\em Advances in Neural Information Processing Systems}, volume~33, pages 4905--4916. Curran Associates, Inc., 2020.

\bibitem{Ruiz2022DreamBoothFT}
Nataniel Ruiz, Yuanzhen Li, Varun Jampani, Yael Pritch, Michael Rubinstein, and Kfir Aberman.
\newblock Dreambooth: Fine tuning text-to-image diffusion models for subject-driven generation.
\newblock {\em 2023 IEEE/CVF Conference on Computer Vision and Pattern Recognition (CVPR)}, pages 22500--22510, 2022.

\bibitem{Saharia2022PhotorealisticTD}
Chitwan Saharia, William Chan, Saurabh Saxena, Lala Li, Jay Whang, Emily~L. Denton, Seyed Kamyar~Seyed Ghasemipour, Burcu~Karagol Ayan, Seyedeh~Sara Mahdavi, Raphael~Gontijo Lopes, Tim Salimans, Jonathan Ho, David~J. Fleet, and Mohammad Norouzi.
\newblock Photorealistic text-to-image diffusion models with deep language understanding.
\newblock {\em ArXiv}, abs/2205.11487, 2022.

\bibitem{Song2022DiffusionGD}
Kunpeng Song, Ligong Han, Bingchen Liu, Dimitris~N. Metaxas, and A.~Elgammal.
\newblock Diffusion guided domain adaptation of image generators.
\newblock {\em ArXiv}, abs/2212.04473, 2022.

\bibitem{Song2019SlicedSM}
Yang Song, Sahaj Garg, Jiaxin Shi, and Stefano Ermon.
\newblock Sliced score matching: A scalable approach to density and score estimation.
\newblock In {\em Conference on Uncertainty in Artificial Intelligence}, 2019.

\bibitem{Song2021ScoreBasedGM}
Yang Song, Jascha~Narain Sohl-Dickstein, Diederik~P. Kingma, Abhishek Kumar, Stefano Ermon, and Ben Poole.
\newblock Score-based generative modeling through stochastic differential equations.
\newblock {\em ArXiv}, abs/2011.13456, 2021.

\bibitem{Strodthoff2020DeepLF}
Nils Strodthoff, Patrick Wagner, Tobias Schaeffter, and Wojciech Samek.
\newblock Deep learning for ecg analysis: Benchmarks and insights from ptb-xl.
\newblock {\em IEEE Journal of Biomedical and Health Informatics}, 25:1519--1528, 2020.

\bibitem{DensityRatio2012}
Masashi Sugiyama, Taiji Suzuki, and Takafumi Kanamori.
\newblock {\em Density Ratio Estimation in Machine Learning}.
\newblock Cambridge University Press, 2012.

\bibitem{Tashiro2021CSDICS}
Yusuke Tashiro, Jiaming Song, Yang Song, and Stefano Ermon.
\newblock {CSDI}: Conditional score-based diffusion models for probabilistic time series imputation.
\newblock {\em ArXiv}, abs/2107.03502, 2021.

\bibitem{Vincent2011ACB}
Pascal Vincent.
\newblock A connection between score matching and denoising autoencoders.
\newblock {\em Neural Computation}, 23:1661--1674, 2011.

\bibitem{Wagner2020PTBXLAL}
Patrick Wagner, Nils Strodthoff, Ralf Bousseljot, Dieter Kreiseler, Fatima~I Lunze, Wojciech Samek, and Tobias Schaeffter.
\newblock Ptb-xl, a large publicly available electrocardiography dataset.
\newblock {\em Scientific Data}, 7, 2020.

\bibitem{Xiang2023ACL}
Chendong Xiang, Fan Bao, Chongxuan Li, Hang Su, and Jun Zhu.
\newblock A closer look at parameter-efficient tuning in diffusion models.
\newblock {\em ArXiv}, abs/2303.18181, 2023.

\bibitem{Xiao2022FewSG}
Jiayu Xiao, Liang Li, Chaofei Wang, Zhengjun Zha, and Qingming Huang.
\newblock Few shot generative model adaption via relaxed spatial structural alignment.
\newblock {\em 2022 IEEE/CVF Conference on Computer Vision and Pattern Recognition (CVPR)}, pages 11194--11203, 2022.

\bibitem{Xie2023DiffFitUT}
Enze Xie, Lewei Yao, Han Shi, Zhili Liu, Daquan Zhou, Zhaoqiang Liu, Jiawei Li, and Zhenguo Li.
\newblock Difffit: Unlocking transferability of large diffusion models via simple parameter-efficient fine-tuning.
\newblock {\em 2023 IEEE/CVF International Conference on Computer Vision (ICCV)}, pages 4207--4216, 2023.

\bibitem{Yang2021OneShotGD}
Ceyuan Yang, Yujun Shen, Zhiyi Zhang, Yinghao Xu, Jiapeng Zhu, Zhirong Wu, and Bolei Zhou.
\newblock One-shot generative domain adaptation.
\newblock {\em 2023 IEEE/CVF International Conference on Computer Vision (ICCV)}, pages 7699--7708, 2021.

\bibitem{Yang2023ImageSU}
Mengping Yang and Zhe Wang.
\newblock Image synthesis under limited data: A survey and taxonomy.
\newblock {\em ArXiv}, abs/2307.16879, 2023.

\bibitem{Zhang2023AddingCC}
Lvmin Zhang, Anyi Rao, and Maneesh Agrawala.
\newblock Adding conditional control to text-to-image diffusion models.
\newblock {\em 2023 IEEE/CVF International Conference on Computer Vision (ICCV)}, pages 3813--3824, 2023.

\bibitem{Zhang2022TowardsDA}
Yabo Zhang, Mingshuai Yao, Yuxiang Wei, Zhilong Ji, Jinfeng Bai, and Wangmeng Zuo.
\newblock Towards diverse and faithful one-shot adaption of generative adversarial networks.
\newblock {\em ArXiv}, abs/2207.08736, 2022.

\bibitem{Zhang2022GeneralizedOD}
Zicheng Zhang, Yinglu Liu, Congying Han, Tiande Guo, Ting Yao, and Tao Mei.
\newblock Generalized one-shot domain adaptation of generative adversarial networks.
\newblock In {\em Neural Information Processing Systems}, 2022.

\bibitem{Zhao2020OnLP}
Miaoyun Zhao, Yulai Cong, and Lawrence Carin.
\newblock On leveraging pretrained gans for generation with limited data.
\newblock In {\em International Conference on Machine Learning}, 2020.

\bibitem{Zhao2022ACL}
Yunqing Zhao, Henghui Ding, Houjing Huang, and Ngai-Man Cheung.
\newblock A closer look at few-shot image generation.
\newblock {\em 2022 IEEE/CVF Conference on Computer Vision and Pattern Recognition (CVPR)}, pages 9130--9140, 2022.

\bibitem{Zhu2022FewshotIG}
Jin Zhu, Huimin Ma, Jiansheng Chen, and Jian Yuan.
\newblock Few-shot image generation with diffusion models.
\newblock {\em ArXiv}, abs/2211.03264, 2022.

\bibitem{Zhu2023DomainStudioFD}
Jin Zhu, Huimin Ma, Jiansheng Chen, and Jian Yuan.
\newblock Domainstudio: Fine-tuning diffusion models for domain-driven image generation using limited data.
\newblock {\em ArXiv}, abs/2306.14153, 2023.

\end{thebibliography}
\end{document}